\title{Transfer Learning with Kernel Methods} % \\and the Corresponding Scaling Laws}
\author{Adityanarayanan Radhakrishnan \thanks{Equal Contribution} \textsuperscript{,}\thanks{Laboratory for Information \& Decision Systems, and Institute for Data, Systems, and Society, Massachusetts Institute of Technology} \textsuperscript{,\hspace{-1mm}} \thanks{Broad Institute of MIT and Harvard} \and Max Ruiz Luyten \samethanks[1]\textsuperscript{~~,}\samethanks[2] \and Neha Prasad\samethanks[2] \and Caroline Uhler\samethanks[2]\textsuperscript{ ,}\samethanks[3] \\}
\date{\today}
\begin{document}

\maketitle

\begin{abstract}
Transfer learning refers to the process of adapting a model trained on a source task to a target task.  While kernel methods are conceptually and computationally simple machine learning models that are competitive on a variety of tasks, it has been unclear how to perform transfer learning for kernel methods.  In this work, we propose a transfer learning framework for kernel methods by projecting and translating the source model to the target task.  We demonstrate the effectiveness of our framework in applications to image classification and virtual drug screening.  In particular, we show that transferring modern kernels trained on large-scale image datasets can result in substantial performance increase as compared to using the same kernel trained directly on the target task. In addition, we show that transfer-learned kernels allow a more accurate prediction of the effect of drugs on cancer cell lines.  For both applications, we identify simple scaling laws that characterize the performance of transfer-learned kernels as a function of the number of target examples. We explain this phenomenon in a simplified linear setting, where we are able to derive the exact scaling laws. By providing a simple  and effective transfer learning framework for kernel methods, our work enables kernel methods trained on large datasets to be easily adapted to a variety of downstream target tasks. 
\end{abstract}

\section{Introduction} 

Transfer learning refers to the machine learning problem of utilizing knowledge from a source task to improve performance on a target task.  Recent approaches to transfer learning have achieved tremendous empirical success in many applications including in computer vision~\cite{TransferLearningComputerVision1, TransferLearningComputerVision2}, natural language processing~\cite{TransferLearningNLP1, TransferLearningNLP2, TransferLearningNLP3}, and the biomedical field~\cite{SkinClassificationTransfer,RetinalDiseaseNatMedicine}. Since transfer learning approaches generally rely on complex deep neural networks, it can be difficult to characterize when and why they work~\cite{TransfusionRaghu}. Kernel methods~\cite{KernelsBook} are conceptually and computationally simple machine learning models that have been found to be competitive with neural networks on a variety of tasks including image classification~\cite{NTKSmallDatasets, FiniteVsInfiniteNeuralNetworks, SimpleFastFlexibleMatrixCompletion} and  drug screening~\cite{SimpleFastFlexibleMatrixCompletion}. Their simplicity stems from the fact that training a kernel method involves performing linear regression after transforming the data.  There has been renewed interest in kernels due to a recently established equivalence between wide neural networks and kernel methods~\cite{NTKJacot, CNTKArora}, which has led to the development of modern, neural tangent kernels (NTKs) that are competitive with neural networks. Given their simplicity and effectiveness, kernel methods could provide a powerful approach for transfer learning and also help characterize when transfer learning between a source and target task would be beneficial. However, developing an algorithm for transfer learning with kernel methods for general source and target tasks has been an open problem.  In particular, while there is a standard transfer learning approach for neural networks that involves replacing and re-training the last layer of a pre-trained network, there is no known corresponding operation for kernels. The limited prior work on transfer learning with kernels focuses on applications in which the source and target tasks have the same label sets~\cite{BoostingTransferLearning, FunctionalLinearRegressionTransferTheory, LinearRegressionTransferTheory}. Examples include predicting stock returns for a given sector based on returns available for other sectors~\cite{FunctionalLinearRegressionTransferTheory} or predicting electricity consumption for certain zones of the United States based on the consumption in other zones~\cite{LinearRegressionTransferTheory}. These methods are not applicable to general source and target tasks with differing label dimensions, which includes classical transfer learning applications such as using a model trained to classify between thousands of objects to subsequently classify new objects.

In this work, we present a general framework for performing transfer learning with kernel methods. % in supervised learning tasks.  
Unlike prior work, our framework enables transfer learning for kernels regardless of whether the source and target tasks have the same or differing label sets.  Furthermore, like for transfer learning methodology for neural networks, our framework allows transferring to a variety of target tasks after training a kernel method only once on a source task. To provide some intuition for our proposed framework, instead of replacing and re-training the last layer of a neural network as is standard for transfer learning using neural networks, our approach for transfer learning using kernels translates to adding a new layer to the end of a neural network.

% \textcolor{red}{maybe you need to provide some intuition here for how you get around the problem that is usually dealt with by cutting off the last layer, namely that intuitively you are adding a new layer that can be mapped to any dimension instead of replacing the last layer with a new last layer.} 

\begin{figure}[!t]
    \centering
    \includegraphics[width=\textwidth]{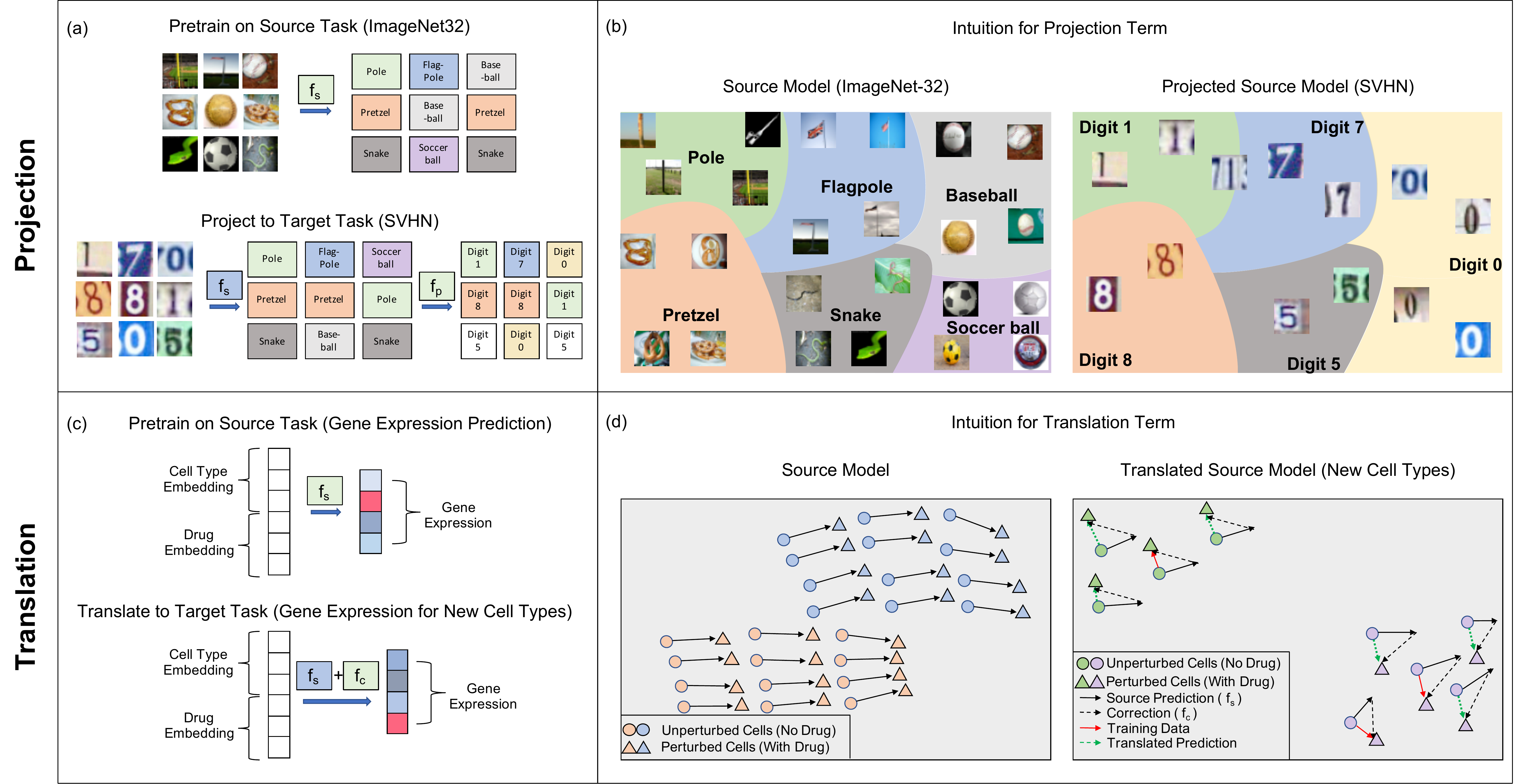}
    \caption{Our framework for transfer learning with kernel methods for supervised learning tasks.  After training a kernel method on a source task, we transfer the source model to the target task via a combination of projection and translation operations.  (a) Projection involves training a second kernel method on the predictions of the source model on the target data, as is shown for image classification between natural images and house numbers.  (d) Projection is effective when the predictions of the source model on target examples provide useful information about target labels; e.g., a model trained to classify natural images may be able to distinguish the images of zeros from ones by using the similarity of zeros to balls and ones to poles.  (c) Translation involves adding a correction term to the source model, as is shown for predicting the effect of a drug on a cell line. (d) Translation is effective when the predictions of the source model can be additively corrected to match labels in the target data; e.g., the predictions of a model trained to predict the effect of drugs on one cell line may be additively adjustable to predict the effect on new cell lines.}
    \label{fig: Overview Schematic}
\end{figure}

The key components of our transfer learning framework are: Train a kernel method on a source dataset and then apply the following operations to transfer the model to the target task.
\begin{itemize}
    \item \textbf{Projection}. We apply the trained source kernel to each sample in the target dataset and then train a secondary model on these source predictions to solve the target task; see Fig.~\ref{fig: Overview Schematic}a.
    \item \textbf{Translation}. When the source and target tasks have the same label sets, we train a correction term that is added to the source model to adapt it to the target task; see Fig.~\ref{fig: Overview Schematic}c.
\end{itemize}

Projection is effective when the source model predictions contain information regarding the target labels. We will demonstrate that this is the case in image classification tasks in which the predictions of a classifier trained to distinguish between a thousand objects in ImageNet32~\cite{ImageNet32} provides information regarding the labels of images in other datasets such as street view house numbers (SVHN)~\cite{SVHN}; see Fig.~\ref{fig: Overview Schematic}b. In particular, we will show across 23 different source and target task combinations that kernels transferred using our approach achieve up to a $10\%$ increase in accuracy over kernels trained on target tasks directly. 

On the other hand, translation is effective when the predictions of the source model can be corrected to match the labels of the target task via an additive term.  We will show that this is the case in virtual drug screening in which a model trained to predict the effect of a drug on one cell line can be adjusted to capture the effect on a new cell line; see Fig.~\ref{fig: Overview Schematic}d. In particular, we will show that our transfer learning approach provides an improvement to prior kernel method predictors~\cite{SimpleFastFlexibleMatrixCompletion} even when transferring to cell lines and drugs not present in the source task.

Interestingly, we observe that for both applications, image classification and virtual drug screening, transfer learned kernel methods follow simple scaling laws; i.e., how the number of available target samples effects the performance on the target task can be accurately modelled. 
%mathematical models governing how performance on target tasks is affected by number of target samples available.  
As a consequence, our work provides a simple method for estimating the impact of collecting more target samples on the performance of the transfer learned kernel predictors. In the simplified setting of transfer learning with linear kernel methods we are able to mathematically derive the scaling laws, thereby providing a mathematical basis for the empirical observations. 
%Lastly, we derive the scaling laws for transfer learning with linear kernel methods, which provably establish empirical observations regarding transfer learning.  
Overall, our work demonstrates that transfer learning with kernel methods between general source and target tasks is possible and demonstrates the simplicity and effectiveness of the proposed method on a variety of important applications.

\section{Results}

In the following, we present our framework for transfer learning with kernel methods more formally.  Since kernel methods are fundamental to this work, we start with a brief review. 

% \textcolor{red}{I feel this paragraph needs to be extended. Either you can put everything here, or you can put something into the appendix. But readers without familiarity in kernels should be able to understand this paper. So more is needed here.} 

Given training examples $X = [x^{(1)}, \ldots, x^{(n)}] \in \mathbb{R}^{d \times n}$, corresponding labels $y = [y^{(1)}, \ldots, y^{(n)}] \in \mathbb{R}^{1 \times n}$, a standard nonlinear approach to fitting the training data is to train a kernel machine~\cite{KernelsBook}.  This approach involves first transforming the data, $\{x^{(i)}\}_{i=1}^{n}$, with a feature map, $\psi$, and then performing linear regression.  To avoid defining and working with feature maps explicitly, kernel machines rely on a kernel function, $K: \mathbb{R}^{d} \times \mathbb{R}^{d} \to \mathbb{R}$, which corresponds to taking inner products of the transformed data, i.e., $K(x^{(i)}, x^{(j)}) = \langle \psi(x^{(i)}), \psi(x^{(j)}) \rangle$. The trained kernel machine predictor uses the kernel instead of the feature map and is given by:
\begin{align}
\label{eq: Kernel Machine}
    \hat{f}(x) = \alpha K(X, x), ~~ \textrm{where }~ \alpha = \argmin_{w \in \mathbb{R}^{1 \times n}} \| y - w K_n\|_2^2~, 
\end{align}
and $K_n \in \mathbb{R}^{n \times n}$ with $(K_n)_{i,j} = K(x^{(i)}, x^{(j)})$ and $K(X, x) \in \mathbb{R}^{n}$ with $K(X, x)_i = K(x^{(i)}, x)$. Note that for datasets with over $10^5$ samples, computing the exact minimizer $\alpha$ is computationally prohibitive, and we instead use fast, approximate iterative solvers such as EigenPro~\cite{EigenProGPU}.  For a more detailed description of kernel methods see Appendix~\ref{appendix: Kernel Regression Review}.

% \textcolor{red}{Maybe provide a bit more intuition? Introduce what kernel regression is, what it means to train/solve such a model, what the methods are, etc. Maybe the sentence on EigenPro from the next paragraph should be brought here.} 

For the experiments in this work, we utilize a variety of kernel functions.  In particular, we consider the classical Laplace kernel given by $K(x, \tilde{x}) = \exp\left(-L \|x - \tilde{x}\|_2 \right)$, which is a standard benchmark kernel that has been widely used for image classification and speech recognition~\cite{EigenProGPU}.  In addition, we consider recently discovered kernels that correspond to infinitely wide neural networks:  While there is an emerging understanding that increasingly wider neural networks generalize better~\cite{belkin2019reconciling, DeepDoubleDescent}, such models are generally computationally difficult to train. Remarkably, recent work identified conditions under which neural networks in the limit of infinite width implement kernel machines; the corresponding kernel is known as the \emph{Neural Tangent Kernel} (\emph{NTK})~\cite{NTKJacot}. In the following, we use the NTK corresponding to training an infinitely wide ReLU fully connected network~\cite{NTKJacot} and also the convolutional NTK (CNTK) corresponding to training an infinitely wide ReLU convolutional network~\cite{CNTKArora}.\footnote{We chose to use the CNTK without global average pooling (GAP)~\cite{CNTKArora} for our experiments.  While the CNTK model with GAP as well as the models considered in~\cite{BiettiConvolutionalKernels} give higher accuracy on image datasets, they are computationally prohibitive to compute for our large-scale experiments.  For example, a CNTK with GAP is estimated to take 1200 GPU hours for 50k training samples~\cite{FiniteVsInfiniteNeuralNetworks}.}

% \textcolor{red}{Here, I think you need more. You can start as you do until to the Laplace kernel. After introducing the Laplace kernel, add one sentence on where it was used. Now add a sentence explaining the remarkable developments of the last couple of years: while training larger and larger neural networks means increasing computational costs, in the limit it was shown that you get a kernel and this can be computed in closed form, etc. etc. This is the NTK. Then say that the corresponding kernel has also been described for convolutional networks, this is the CNTK, etc. } In this work, we will consider the following kernels: (1) the standard Laplace kernel given by $K(x, \tilde{x}) = \exp\left(-L \|x - \tilde{x}\|_2 \right)$; (2) the  neural tangent kernel (NTK)~\cite{NTKJacot} corresponding to training an infinitely wide ReLU fully connected network; and (3) the convolutional NTK (CNTK)~\cite{CNTKArora} corresponding to training an infinitely wide ReLU convolutional network.\footnote{We chose to use the CNTK without global average pooling (GAP)~\cite{CNTKArora} for our experiments.  While the CNTK model with GAP as well as the models considered in~\cite{BiettiConvolutionalKernels} give higher accuracy on image datasets, they are computationally prohibitive to compute for our large-scale experiments.  For example, a CNTK with GAP is estimated to take 1200 GPU hours for 50k training samples~\cite{FiniteVsInfiniteNeuralNetworks}.} 
Unlike the usual supervised learning setting where we train a predictor on a single domain, we will consider the following transfer learning setting from~\cite{Zhuang:SurveyTL}, which involves two domains: (1) a source with domain $\mathcal{X}_s$ and data distribution $\mathbb{P}_s$; and (2) a target with domain $\mathcal{X}_t$ and data distribution $\mathbb{P}_t$.  The goal is to learn a model for a target task $f_t: \mathcal{X}_t \to \mathcal{Y}_t$ by making use of a model trained on a source task $f_s: \mathcal{X}_s \to \mathcal{Y}_s$. We let $c_s$ and $c_t$ denote the dimensionality of $\mathcal{Y}_s$ and $\mathcal{Y}_t$ respectively, i.e. for image classification these denote the number of classes in the source and target.  Lastly, we let $(X_s, y_s) \in \mathcal{X}_s^{n_s} \times \mathcal{Y}_s^{n_s}$ and $(X_t, y_t) \in \mathcal{X}_t^{n_t} \times \mathcal{Y}_t^{n_t}$ denote the source and target dataset, respectively.  Throughout this work, we assume that the source and target domains are equal ($\mathcal{X}_s = \mathcal{X}_t$), but that the data distributions differ ($\mathbb{P}_s \neq \mathbb{P}_t$).

%  To this end, we will typically consider the case that the number of samples in the source dataset, $n_s$, is larger than that of the target dataset, $n_t$.  

Our work is concerned with the recovery of $f_t$ by transferring a model, $\hat{f}_s$, that is learned by training a kernel machine on the source dataset.  To enable transfer learning with kernels, we propose the use of two methods, \emph{projection} and \emph{translation}.  We first describe these methods individually and demonstrate their performance on transfer learning for image classification using kernel methods.  For each method, we empirically establish scaling laws relating the quantities $n_s, n_t, c_s, c_t$ to the performance boost given by transfer learning, and we also derive explicit scaling laws when $f_t, f_s$ are linear maps.  We then utilize a combination of the two methods to perform transfer learning in an application to virtual drug screening.  Code and hardware details are available in Appendix~\ref{appendix: code and hardware}.

% and demonstrate how such techniques allow improved prediction of new cell and drug type combinations.  

\subsection{Transfer learning via projection}

Projection involves learning a map from source model predictions to target labels and is thus particularly suited for situations where the number of labels in the source task $c_s$ is much larger than the number of labels in the target task $c_t$.

%We begin with a description of transfer learning via projection, which is defined below.

% At a high level, projection involves using the output of a model trained on the source dataset as the input for training a model on the target dataset. This is described formally in Definition~\ref{def: Projection} below.  

\begin{definition}
\label{def: Projection}
Given a source dataset $(X_s, y_s)$ and a target dataset $(X_t, y_t)$, the \textbf{projected predictor}, $\hat{f}_t$, is given by:
\begin{align}
\label{eq: Projected Predictor}
    \hat{f}_t(x) = \hat{f}_p (\hat{f}_s(x)), \textrm{ where } ~ \hat{f}_p := \argmin_{\{f: \mathcal{Y}_s \to \mathcal{Y}_t\}} \|y_t - f(\hat{f}_s(X_t))\|^2,
\end{align}
where $\hat{f}_s$ is a predictor trained on the source dataset.\footnote{When there are infinitely many possible values for the parameterized function $\hat{f}_p$, we consider the minimum norm solution.}  
\end{definition}

% \begin{algorithm}[!t]
% \caption{Pseudo-code for projection}\label{alg:Proj}
% \KwData{$(X\_s, y\_s), (X\_t, y\_t)$\Comment*[r]{Train datasets}}
% \textbf{Train }$f\_s$ \textbf{on $(X\_s, y\_s)$}\Comment*[r]{Source model}
% $(X\_p[i, :], y\_p) \gets (f\_s (X_t[i, :]), y\_t)$ \Comment*[r]{Projected dataset}
% \textbf{Train }$f\_p$ \textbf{on $(X\_p, y\_p)$}\Comment*[r]{Projected model}
% \KwData{$(X\_test, y\_test)$\Comment*[r]{Target test dataset}}
% $y\_pred[i] \gets f\_p (f\_s (X[i, :]))$\;
% %$Loss \gets \mathcal{L}(y\_pred, y\_test)$\;
% \end{algorithm}

  While Definition~\ref{def: Projection} is applicable to any machine learning method, we focus on predictors $\hat{f}_s$ and $\hat{f}_p$ parameterized by kernel machines given their conceptual and computational simplicity.  As illustrated in Fig.~\ref{fig: Overview Schematic}a and b, projection is effective when the predictions of the source model already provide useful information for the target task.

\begin{figure}[!t]
    \centering
    \includegraphics[width=\textwidth]{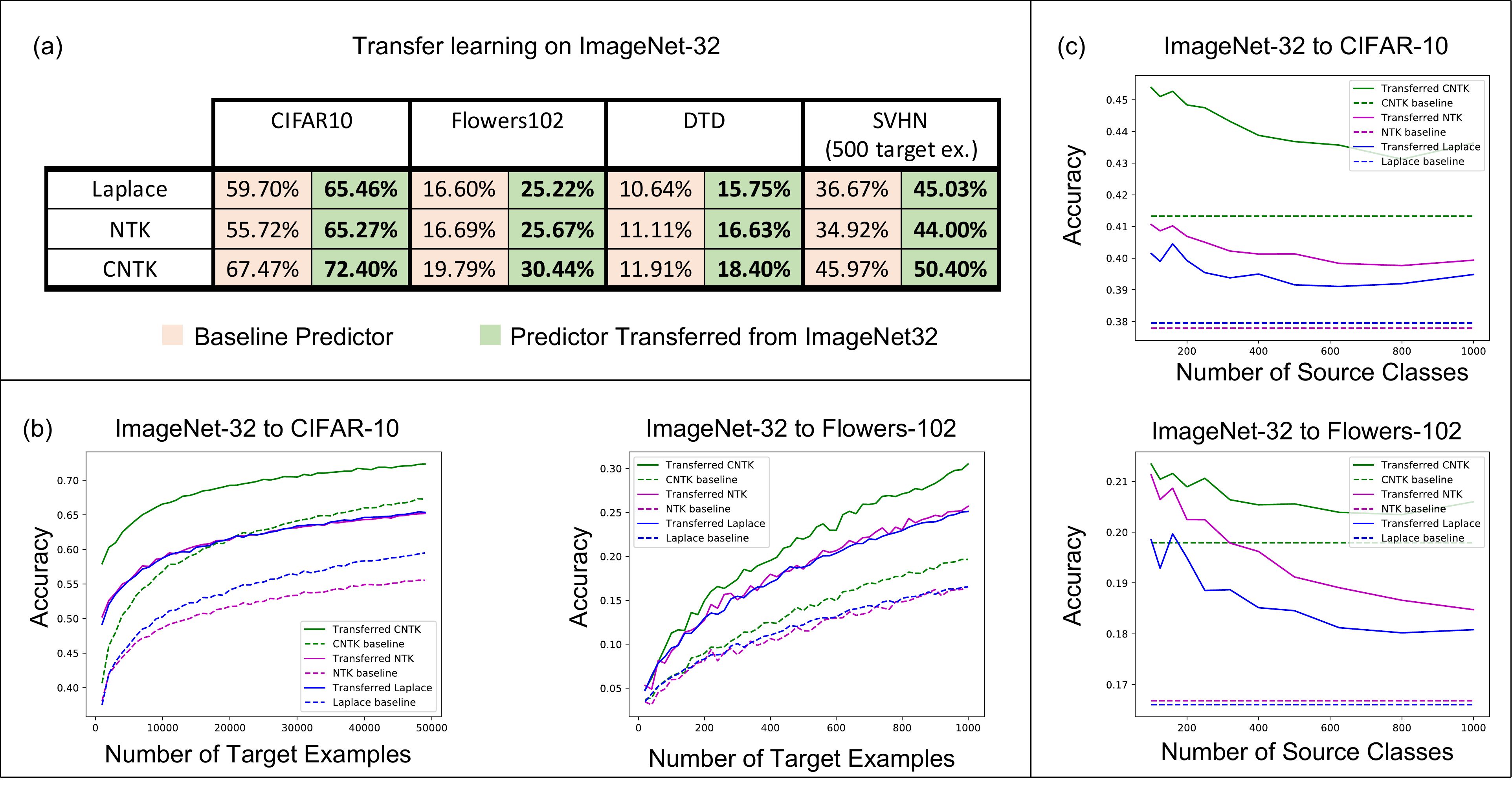}
    \caption{Analysis of transfer learning with kernels trained on ImageNet32 to CIFAR10, Oxford 102 Flowers, DTD, and a subset of SVHN.  All curves in (b,c) are averaged over 3 random seeds.  (a) Comparison of the transferred kernel predictor test accuracy (green) to the test accuracy of the baseline kernel predictors trained directly on the target tasks (red). In all cases, the transferred kernel predictors outperform the baseline predictors and the difference in performance is as high as $10\%$.  (b) Test accuracy of the transferred and baseline predictors as a function of the number of target examples. These curves, which quantitatively describe the benefit of collecting more  target examples, follow simple logarithmic trends ($R^2 >.95$). (c) Performance of the transferred kernel methods decreases when increasing the number of source classes but keeping the total number of source examples fixed. Corresponding plots for DTD and SVHN are in Appendix Fig.~\ref{SI Fig: DTD and SVHN}.}
    \label{fig:Fig 2 Projection}
\end{figure}

\par{\textbf{Improving kernel-based image classifier performance with projection.}}  We now demonstrate the effectiveness of projected kernel predictors for image classification.  In particular, we first train kernels to classify among 1000 objects across 1.28 million images in ImageNet32 and then transfer these models to 4 different target image classification datasets: CIFAR10~\cite{CIFAR10}, Oxford 102 Flowers~\cite{Oxford102}, Describable Textures Dataset (DTD)~\cite{DTD}, and SVHN~\cite{SVHN}.  We selected these datasets since they cover a variety of transfer learning settings, i.e. all of the CIFAR10 classes are in ImageNet32, ImageNet32 contains only 2 flower classes, and none of DTD and SVHN classes are in ImageNet32.  A full description of the datasets is provided in Appendix~\ref{appendix: Image Classification Data}.

For all datasets, we compare the performance of 3 kernels (the Laplace kernel, NTK, and CNTK) when trained just on the target task, i.e. the \textit{baseline predictor}, and when transferred via projection from ImageNet32.  Training details for all kernels are provided in Appendix~\ref{appendix: Training and Architecture Details}.  In Fig.~\ref{fig:Fig 2 Projection}a, we showcase the improvement of projected kernel predictors over baseline predictors across all datasets and kernels.  We observe that projection yields a sizeable increase in accuracy (up to $10\%$) on the target tasks, thereby highlighting the effectiveness of this method.  It is remarkable that this performance increase is observed even for transferring to Oxford 102 Flowers or DTD, datasets that have little to no overlap with images in ImageNet32. %\textcolor{blue}{I just removed the sentence on being the first to train CNTKs on ImageNet32 - this is no longer true given the latest papers.}

% We achieve this by leveraging fast kernel computation from the Neural Tangents library~\cite{NeuralTangentsGoogle} together with fast kernel solvers such as EigenPro~\cite{EigenProGPU}. \textcolor{red}{not clear what the innovation is over just putting the two things together. Explain better?} 

In Appendix Fig.~\ref{SI Fig: CNN Projection}a, we compare our results with those of a finite-width neural network analog of the (infinite-width) CNTK where all layers of the source network are fine-tuned on the target task using the standard cross-entropy loss~\cite{goodfellow2016deep} and the Adam optimizer~\cite{Adam}.  We observe that the performance gap between transfer-learned finite-width neural networks and the projected CNTK is largely influenced by the performance gap between these models on ImageNet32.  In fact, in Appendix Fig.~\ref{SI Fig: CNN Projection}a, we  show that finite-width neural networks trained to the same test accuracy on ImageNet32 as the (infinite-width) CNTK yield lower performance than the CNTK when transferred to target image classification tasks.

% while the CNTK achieves lower test accuracy on ImageNet32 than the corresponding neural net ($10.6\%$ vs. $16.5\%$), the projected CNTK 
% these networks utilize a variety of modern training techniques and all layers are fine-tuned, their performance is only slightly higher than those of the corresponding CNTK for which only the source model predictions are used for transfer learning.

% \par{\textbf{Accuracy of projected predictors scales logarithmically with the number of target examples.}} 
The computational simplicity of kernel methods allows us to compute scaling laws for the  projected predictors.  In Fig.~\ref{fig:Fig 2 Projection}b, we analyze how the performance of projected kernel methods varies as a function of the number of target examples, $n_t$, for CIFAR10 and Oxford 102 Flowers.  The results for DTD and SVHN are presented in Appendix Fig.~\ref{SI Fig: DTD and SVHN}a and b.  For all target datasets, we observe that the accuracy of the projected predictors follows a simple logarithmic trend given by the curve $a\log n_t + b$ for constants $a, b$ ($R^2$ values on all datasets are above $0.95$).  By fitting this curve on the accuracy corresponding to just the smallest five values of $n_t$, we are able to predict the accuracy of the projected predictors within 2\% of the reported accuracy for large values of $n_t$ (see Appendix~\ref{appendix: Scaling Law Projection Details} and Appendix Fig.~\ref{SI Fig: R2 projection values}).  The robustness of this fit across many target tasks illustrates the practicality of the transferred kernel methods for estimating the number of target examples needed to achieve a given accuracy.  Additional results on the scaling laws upon varying the number of source examples per class are presented in Appendix Fig.~\ref{SI Fig: Projection scaling law CIFAR10} for transferring between ImageNet32 and CIFAR10. In general, we observe that the performance increases as the number of source training examples per class increases, which is expected given the similarity of source and target tasks.

% \textcolor{red}{Say something about what this means? Whether this is expected, etc?}  

Lastly, we analyze the impact of increasing the number of classes while keeping the total number of source training examples fixed at $40$k. Fig.~\ref{fig:Fig 2 Projection}c shows that having few samples for each class can be worse than having a few classes with many samples.  This may be expected for datasets such as CIFAR10, where the classes overlap with the ImageNet32 classes:  having few classes with more examples that overlap with CIFAR10 should be better than having many classes with fewer examples per class and less overlap with CIFAR10. A similar trend can be observed for DTD, but interestingly, the trend differs for SVHN, indicating that SVHN images can be better classified by projecting from a variety of ImageNet32 classes (see Appendix Fig.~\ref{SI Fig: DTD and SVHN}).

% \textcolor{red}{The following snetences should be rewritten: Interestingly, a similar trend can be observed for ... (see Appendix Fig....)} In Appendix Fig.~\ref{SI Fig: DTD and SVHN}c and d, we demonstrate the corresponding trends for DTD and SVHN.  Interestingly, performance on SVHN increases when using more source classes with fewer samples indicating that SVHN images can be better classified by projecting from a variety of ImageNet32 classes.  

\subsection{Transfer learning via translation}
While projection involves composing a map with the source model, the second component of our framework, translation, involves adding a map to the source model as follows.  

\begin{definition}
\label{def: Translation}
Given a source dataset $(X_s, y_s)$ and a target dataset $(X_t, y_t)$, the \textbf{translated predictor}, $\hat{f}_t$, is given by:
\begin{align}
\label{eq: Translated Predictor}
    \hat{f}_t(x) = \hat{f}_s(x) + \hat{f}_c(x), \textrm{ where } ~\hat{f}_c = \argmin_{\{f: \mathcal{X}_t \to \mathcal{Y}_t \}} \|y_t - \hat{f}_s(X_t) - f(X_t)\|^2,
\end{align}
where $\hat{f}_s$ is a predictor trained on the source dataset.\footnote{When there are infinitely many possible values for the parameterized function $\hat{f}_c$, we consider the minimum norm solution.}    
\end{definition}

% We present pseudo-code for translation in Algorithm~\ref{alg:Transl}.  
Translated predictors correspond to first utilizing the trained source model directly on the target task and then applying a correction, $\hat{f}_c$, which is learned by training a model on the corrected labels, $y_t - \hat{f}_s(X_t)$.  Like for the projected predictors, translated predictors can be implemented using any machine learning model, including kernel methods.  When the predictors $\hat{f}_s$ and $\hat{f}_c$ are parameterized by linear models, translated predictors correspond to training a target predictor with weights initialized by those of the trained source predictor (proof in Appendix~\ref{appendix: Translation using Linearized NN is Finetuning}).  We note that training translated predictors is also a new form of boosting~\cite{PatternRecognitionMachineLearning} between the source and target dataset, since the correction term accounts for the error of the source model on the target task.  Lastly, we note that while the formulation given in Definition~\ref{def: Translation} requires the source and target tasks to have the same label dimension, projection and translation can be naturally combined to overcome this restriction.

\begin{figure}[!t]
    \centering
    \includegraphics[width=\textwidth]{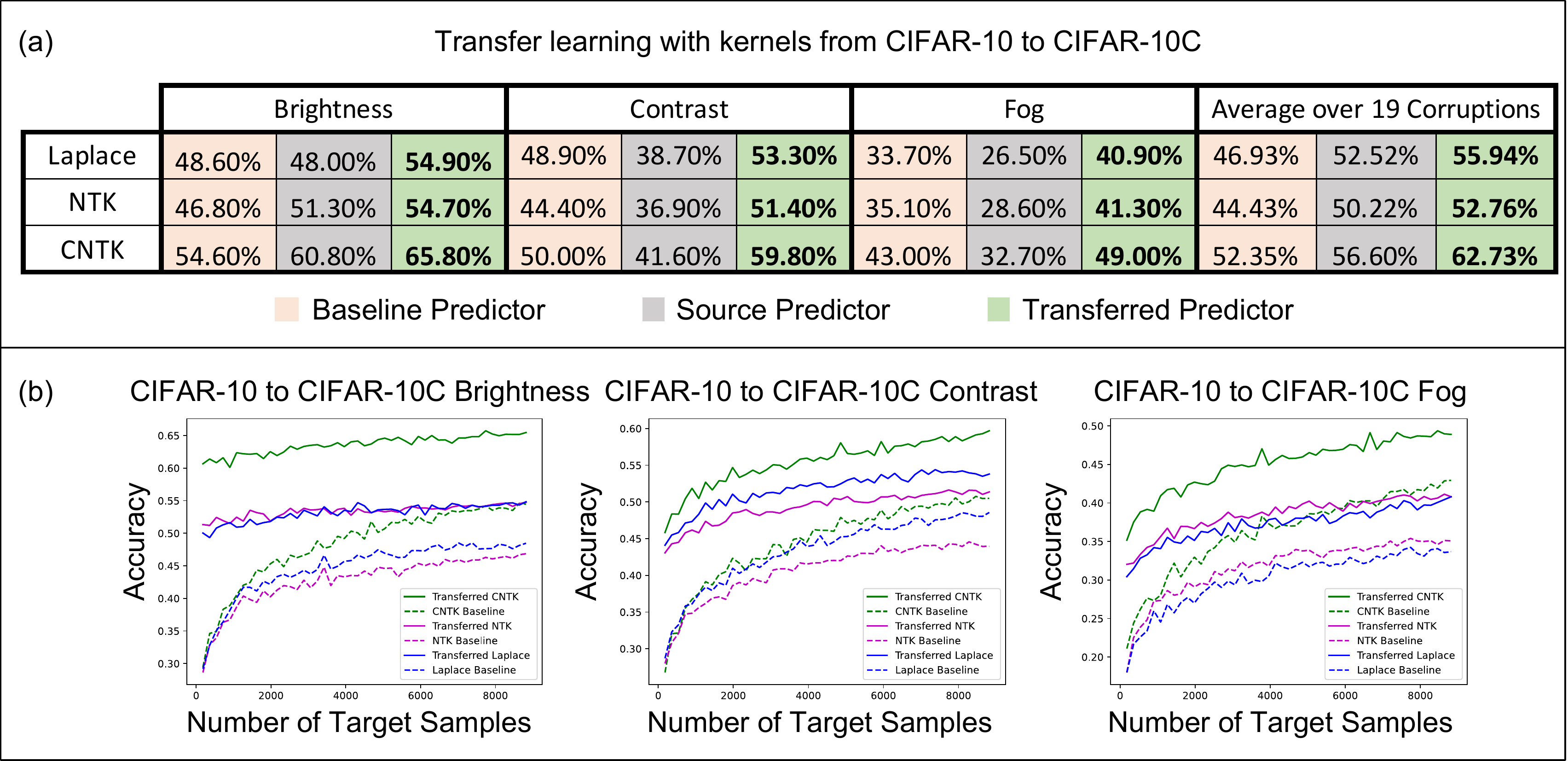}
    \caption{Transferring kernel methods from CIFAR10 to adapt to 19 different corruptions in CIFAR10-C. (a) Test accuracy of baseline kernel method (red), using source predictor given by directly applying the kernel trained on CIFAR10 to CIFAR10-C (gray), and transferred kernel method (green). The transferred kernel method outperforms the other models on all 19 corruptions and even improves on the baseline kernel method when the source predictor exhibits a decrease in performance.  Additional results are presented in Appendix Fig.~\ref{SI Fig: Translation Kernel Perf}. (b) Performance of the transferred and baseline kernel predictors as a function of the number of target examples.  The transferred kernel method can outperform both source and baseline predictors even when transferred using as little as $200$ target examples.}
    \label{fig:Fig 3 Translation}
\end{figure}

% In particular, applying the conventional neural network approach to linear models is equivalent to performing boosting across datasets with only a single model. 
% We note that prior work considered translation for non-interpolating models in the restricted setting of linear regression~\cite{LinearRegressionTransferTheory}.  The work of~\cite{FunctionalLinearRegressionTransferTheory} presented a similar algorithm for transfer learning for functional linear regression but train a source model on both source and target data before re-training on target data.  Note that such an approach would simply reduce to training on pooled data for interpolating models unlike the translated predictors in this work.   

% \begin{algorithm}[!t]
% \caption{Pseudo-code for translation}\label{alg:Transl}
% \KwData{$(X\_s, y\_s), (X\_t, y\_t)$\Comment*[r]{Train datasets}}
% \textbf{Train }$f\_s$ \textbf{on $(X\_s, y\_s)$}\Comment*[r]{Source model}
% $y\_er[i] \gets y\_t[i] - f\_s (X_t[i, :])$ \Comment*[r]{Translated labels}
% \textbf{Train }$f\_c$ \textbf{on $(X\_s, y\_er)$}\Comment*[r]{Translation term}
% \KwData{$(X\_test, y\_test)$\Comment*[r]{Target test dataset}}
% $y\_pred[i] \gets f\_s (X[i, :]) + f\_c(X[i, :])$\;
% %$Loss \gets \mathcal{L}(y\_pred, y\_test)$\;
% \end{algorithm}
% \subsubsection*{Transfer Learning with Translation in Image Classification}

\par{\textbf{Improving kernel-based image classifier performance with translation.}} We now demonstrate that the translated predictors are particularly well-suited for correcting kernel methods to handle distribution shifts in images.  Namely, we consider the task of transferring a source model trained on CIFAR10 to corrupted CIFAR10 images in CIFAR10-C~\cite{CIFAR10C}.  CIFAR10-C consists of the test images in CIFAR10, but the images are corrupted by one of 19 different perturbations such as adjusting image contrast and introducing natural artifacts such as snow or frost.  In our experiments, we select the $10$k images of CIFAR10-C with the highest level of perturbation, and we reserve $9$k images of each perturbation for training and $1$k images for testing.  In Appendix Fig.~\ref{SI Fig: Translation ImageNet to CIFAR10}, we additionally analyze translating kernels from subsets of ImageNet32 to CIFAR10.

Again, we compare the performance of the three kernel methods considered for projection, but along with the accuracy of the translated predictor and baseline predictor, we also report the accuracy of the \textit{source predictor}, which is given by using the source model directly on the target task.  In Fig.~\ref{fig:Fig 3 Translation}a and Appendix Fig.~\ref{SI Fig: Translation Kernel Perf}, we show that the translated predictors outperform the baseline and source predictors on all 19 perturbations.  Interestingly, even for corruptions such as contrast and fog where the source predictor is worse than the baseline predictor, the translated predictor outperforms all other kernel predictors by up to $11\%$.  In Appendix Fig.~\ref{SI Fig: Translation Kernel Perf}, we show that for these corruptions, the translated kernel predictors also outperform the projected kernel predictors trained on CIFAR10.  In Appendix Fig.~\ref{SI Fig: CNN Projection}b, we additionally compare with the performance of a finite-width analog of the CNTK by fine-tuning all layers on the target task with cross-entropy loss and the Adam optimizer.  We observe that the translated kernel methods outperform the corresponding neural networks. Remarkably kernels translated from CIFAR10 can even outperform fine-tuning a neural network pre-trained on ImageNet32 for several perturbations (see Appendix Fig.~\ref{SI Fig: CNN Projection}c).  

Analogously to our analysis of the projected predictors, we visualize how the accuracy of the translated predictors is affected by the number of target examples, $n_t$, for a subset of corruptions shown in Fig.~\ref{fig:Fig 3 Translation}b.  We observe that the performance of the translated predictors is heavily influenced by the performance of the source predictor.  For example, as shown in Fig.~\ref{fig:Fig 3 Translation}b for the brightness perturbation, where the source predictor already achieves an accuracy of $60.80\%$, the translated predictors achieve an accuracy of above $60\%$ when trained on only $10$ target training samples. For the examples of the contrast and fog corruptions, Fig.~\ref{fig:Fig 3 Translation}b also shows that very few target examples allow the translated predictors to outperform the source predictors (e.g., by up to $5\%$ for only $200$ target examples). Overall, our results showcase that translation is effective at adapting kernel methods to distribution shifts in image classification.

\subsection{Transfer learning via projection and translation in virtual drug screening}
\label{sec: virtual drug screening} 

\begin{figure}[!t]
    \centering
    \includegraphics[width=\textwidth]{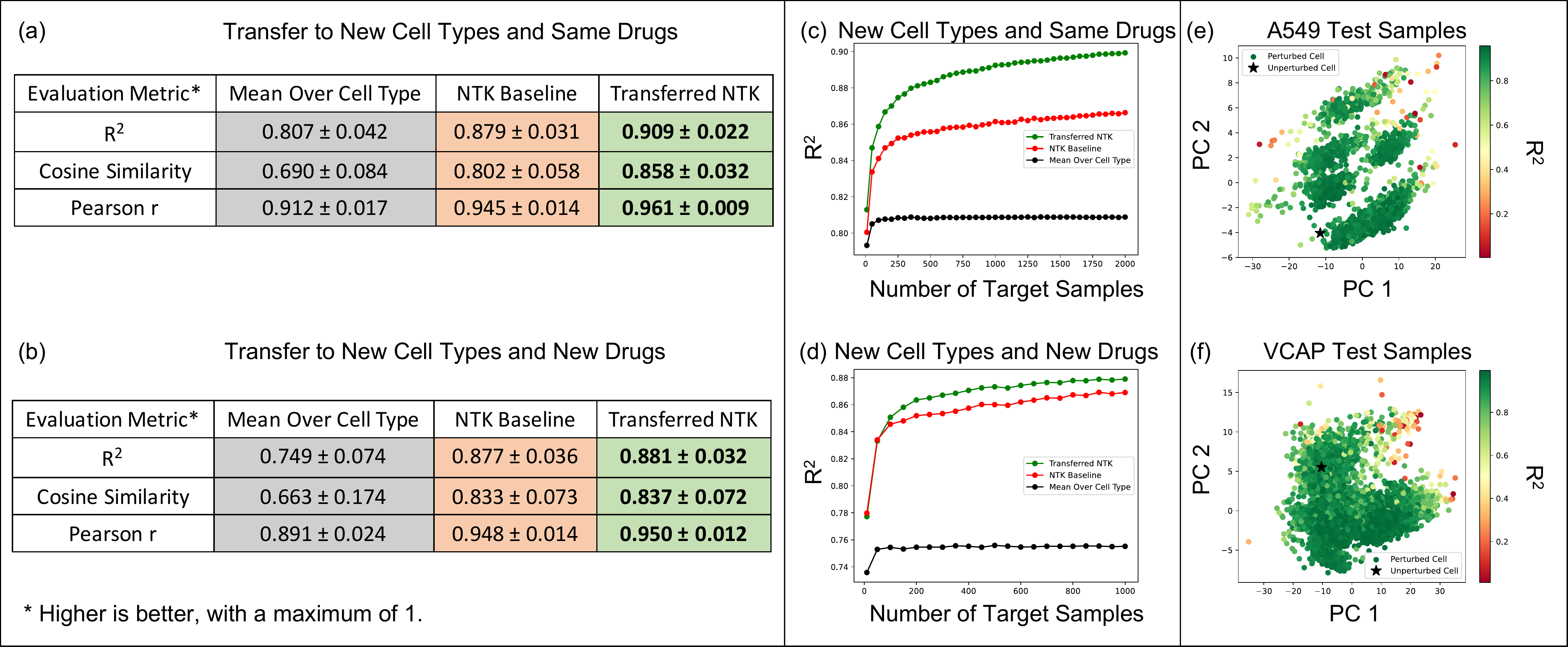}
    \caption{Transferring the NTK trained to predict gene expression for given drug and cell line combinations in CMAP to new drug and cell line combinations.  (a, b) The transfer learned NTK (green) outperforms imputation by mean over cell line (gray) and previous NTK baseline predictors from~\cite{SimpleFastFlexibleMatrixCompletion} across $R^2$, cosine similarity, and Pearson r metrics.  All results are averaged over the performance on 5 cell lines and are stratified by whether or not the target data contains drugs that are present in the source data. (c, d)  The transferred kernel method performance follows a logarithmic trend ($R^2 > .9$) as a function of the number of target examples and exhibits a better scaling coefficient than the baselines.  The results are averaged over 5 cell lines.  (e, f) Visualization of the performance of the transferred NTK in relation to the top two principal components of gene expression for target drug and cell line combinations. The performance of the NTK is generally lower for cell and drug combinations that are further from the control gene expression for a given cell line. Visualizations for the remaining 3 cell lines are presented in Appendix Fig.~\ref{SI Fig: Performance by Cell Type}.}
    \label{fig:Fig 4 Drug Screening}
\end{figure}

% the effect of drugs on the genes of diseased cell types
We now demonstrate the effectiveness of projection and translation for the use of kernel methods for virtual drug screening. %Drug screening is the task of identifying candidate treatments for diseases by measuring how drugs affect the genes of diseased cell types. 
A common problem in drug screening is that experimentally measuring many different drug and cell line combinations is both costly and time consuming.  The goal of virtual drug screening approaches is to computationally identify promising candidates for experimental validation.  Such approaches involve training models on existing experimental data to then impute the effect of drugs on cell lines for which there was no experimental data.  

The CMAP dataset~\cite{CMap} is a large-scale, publicly available drug screen containing measurements of 978 landmark genes for 116,228 combinations of 20,336 drugs (molecular compounds) and 70 cell lines.  This dataset has been an important resource for drug screening\cite{DrugRepurposingReview, COVIDAutoencoding}.\footnote{CMAP also contains data on genetic perturbations; but in this work, we focus on imputing the effect of chemical perturbations only.}  Prior work for virtual drug screening demonstrated the effectiveness of low-rank tensor completion and nearest neighbor predictors for imputing the effect of unseen drug and cell line combinations in CMAP~\cite{SontagDrugImputation}.  However, these methods crucially rely on the assumption that for each drug there is at least one measurement for every cell line, which is not the case when considering new chemical compounds.  To overcome this issue, recent work~\cite{SimpleFastFlexibleMatrixCompletion} introduced kernel methods for drug screening using the NTK to predict gene expression vectors from drug and cell line embeddings, which capture the similarity between drugs and cell lines.  

In the following, we demonstrate that the NTK predictor can be transferred to improve gene expression imputation for drug and cell line combinations, even in cases where neither the particular drug nor the particular cell line were available when training the source model.  To utilize the framework of~\cite{SimpleFastFlexibleMatrixCompletion}, we use the control gene expression vector as cell line embedding and the 1024 bit circular fingerprints from~\cite{DeepChem} as drug embedding.  All pre-processing of the CMAP gene expression vectors is described in Appendix~\ref{appendix: Pre-processing for CMAP}.  For the source task, we train the NTK to predict gene expression for the 54,444 drug and cell line combinations corresponding to the 65 cell lines with the least drug availability in CMAP.  We then impute the gene expression for each of the 5 cell lines (A375, A549, MCF7, PC3, VCAP) with the most drug availability.  We chose these data splits in order to have sufficient target samples to analyze model performance as a function of the number of target samples. In our analysis of the transferred NTK, we always consider transfer to a new cell line, and we stratify by whether a drug in the target task was already available in the source task. For this application we combine projection and translation into one predictor as follows.

%\textcolor{blue}{Since the effect of a drug on a new cell line can involve re-scaling and translating the effects of drugs on other cell lines, we combine projection and translation into one predictor as follows.}
\begin{definition}
\label{def: Combined predictor}
Given a source dataset $(X_s, y_s)$ and a target dataset $(X_t, y_t)$, the \textbf{projected and translated predictor}, $\hat{f}_{pt}$, is given by: 
\begin{align*}
    \hat{f}_{pt}(x) = \hat{f}\left(\left[\hat{f}_s(x) ~ | ~ x\right]\right), \textrm{ where } ~ \hat{f} = \argmin_{f: \mathcal{Y}_s \times \mathcal{X}_t \to \mathcal{Y}_s} \left\| y_t - f \left(\left[\hat{f}_s(X_t) ~| ~ X_t\right]\right)\right\|^2,
\end{align*}
where $\hat{f}_s$ is a predictor trained on the source dataset and $\left[\hat{f}_s(x) ~| ~ x\right] \in \mathcal{Y}_s \times \mathcal{X}_t$ is the concatenation of $\hat{f}_s(x)$ and $x$. 
\end{definition}
Note that if we omit $x, X_t$ in the concatenation above, we get the projected predictor and if we omit $\hat{f}_s$ in the concatenation above, we get the translated predictor.  Generally, $\hat{f}_s(x)$ and $x$ can correspond to different modalities (e.g., class label vectors and images), but in the case of drug screening, both correspond to gene expression vectors of the same dimension.  Thus, combining projection and translation is natural in this context.

Fig.~\ref{fig:Fig 4 Drug Screening}a and b show that the transferred kernel predictors outperform both, the baseline model from~\cite{SimpleFastFlexibleMatrixCompletion} as well as imputation by mean (over each cell line) gene expression across three different metrics ($R^2$, cosine similarity, and Pearson r value) on both tasks (i.e., transferring to drugs that were seen in the source task as well as completely new drugs).  All metrics considered are described in Appendix~\ref{appendix: Metrics Drug Screening}.  All training details are presented in Appendix~\ref{appendix: Training and Architecture Details}.  Interestingly, the transferred kernel methods provide a boost over the baseline kernel methods even when transferring to new cell lines and new drugs.  But as expected, we note that the increase in performance is greater when transferring to drug and cell line combinations for which the drug was available in the source task. Fig.~\ref{fig:Fig 4 Drug Screening}c and d show that the transferred kernels again follow simple logarithmic scaling laws (fitting a logarithmic model to the red and green curves yields $R^2 > 0.9$).  We note that the transferred NTKs have better scaling coefficients than the baseline models, thereby implying that the performance gap between the transferred NTK and the baseline NTK grows as more target examples are collected.  In Fig.~\ref{fig:Fig 4 Drug Screening}e and f, we visualize the performance of the transferred NTK in relation to the top 2 principal components of gene expression for drug and cell line combinations.  We generally observe that the performance of the NTK is lower for cell and drug combinations that are further from the control, i.e., the unperturbed state.  Plots for the other 3 cell lines are presented in Appendix Fig.~\ref{SI Fig: Performance by Cell Type}.  In Appendix~\ref{appendix: DepMap Analysis} and Appendix Fig.~\ref{SI Fig: DepMap Analysis}, we show that this approach can also be used for other transfer learning tasts related to virtual drug screening. In particular, we show that the imputed gene expression vectors  can be transferred to predict the viability of a drug and cell line combination in the large-scale, publicly available Cancer Dependency Map (DepMap) dataset~\cite{DepMap}.

\subsection{Theoretical analysis of projection and translation in the linear setting}
\label{sec: theoretical analysis} 
In the following, we provide  explicit scaling laws for the performance of projected and translated kernel methods in the linear setting, thereby providing a mathematical basis for the empirical observations in the previous sections. 
%a theoretical analysis of the generalization of projected and translated predictors under a simplified linear setting.  Our analysis yields explicit scaling laws for the performance of transferred linear predictors and formalizes several intuitions regarding transfer learning that have been only previously empirically observed.

\par{\textbf{Derivation of the scaling law for the projected predictor in the linear setting.}} %We begin by deriving scaling laws for projected predictors in the following linear setting.  
We assume that $\mathcal{X} = \mathbb{R}^{d}$, $\mathcal{Y}_s = \mathbb{R}^{c_s}$, $\mathcal{Y}_t = \mathbb{R}^{c_t}$ and that $f_s$ and $f_t$ are linear maps, i.e., $f_s = \omega_s \in  \mathbb{R}^{c_s \times d}$ and $f_t = \omega_t \in \mathbb{R}^{c_t \times d}$.  The following results provide a theoretical foundation for the empirical observations regarding the role of the number of source classes and the number of source samples for transfer learning shown in Fig.~\ref{fig:Fig 2 Projection} as well as in~\cite{TransferLearningAnalysisImageNet}. In particular, we will derive scaling laws for the \emph{risk}, or expected test error, of the projected predictor as a function of the number of source examples, $n_s$, target examples, $n_t$, and number of source classes, $c_s$.  We note that the risk of a predictor is a standard object of study for understanding generalization in statistical learning theory~\cite{VapnikStatisticalLearningTheory} and defined as follows.

% We will establish the scaling laws for the projected predictor by deriving a closed form for the risk. \textcolor{red}{this section needs much more explanation. E.g. here you would need to explain how the risk is connected to the scaling law}

% In this linear setting, our results characterize the role of the number of source samples, target samples, and number of source classes on the performance of transfer learned models.   
% Moreover, our results formalize prior intuitions regarding benefits of transfer learning. 

\begin{definition}
Let $\mathbb{P}$ be a probability density on $\mathbb{R}^{d}$ and let $x, x^{(i)} \overset{i.i.d.}{\sim} \mathbb{P}$ for $i = 1, 2, \ldots n$.  Let $X = [x^{(1)}, \ldots, x^{(n)}] \in \mathbb{R}^{d \times n}$ and $y = [w^*x^{(1)}, \ldots w^*x^{(n)}] \in \mathbb{R}^{c \times n}$ for $w^* \in \mathbb{R}^{c \times d}$. The \textbf{risk} of a predictor $\hat{w}$ trained on the samples $(X, y)$ is given by 
\begin{align}
\label{eq: Risk}
    \mathcal{R}(\hat{w}) = \mathbb{E}_{x, X}[\|w^*x  - \hat{w}x\|_F^2]~.
\end{align}
\end{definition}

By understanding how the risk scales with the number of source examples, target examples, and source classes, we can characterize the settings in which transfer learning is beneficial.  As is standard in analyses of the risk of over-parameterized linear regression~\cite{LinearRegressionMinimumNorm, TwoModelsDoubleDescent, BenignOverfitting, SurprisesHighDimensional}, we consider the risk of the minimum norm solution given by
\begin{align*}
\hat{w} = \argmin_{w} \|y - w X\|_F^2, ~\textrm{ i.e., }~ \hat{w} = y X^{\dagger}~, 
\end{align*}
where $X^{\dagger}$ is the Moore-Penrose inverse of $X$.  Theorem \ref{Thm:: Projection ex scaling linear} establishes a closed form for the risk of the projected predictor $\hat{\omega}_p \hat{\omega}_s$, thereby giving a closed form for the scaling law for transfer learning in the linear setting; the proof is given in Appendix~\ref{appendix: Proof of Projection Theorem}.

\begin{theorem}
\label{Thm:: Projection ex scaling linear}
Let $\mathcal{X} = \mathbb{R}^d$, $\mathcal{Y}_s = \mathbb{R}^{c_s}$, $\mathcal{Y}_t = \mathbb{R}^{c_t}$, and let $\hat{\omega}_s = y_s X_s^{\dagger}$ and $\hat{\omega}_p = y_t (\hat{\omega}_s X_t)^{\dagger}$.  Assuming that $\mathbb{P}_s$ and $\mathbb{P}_t$ are independent, isotropic distributions on $\mathbb{R}^d$, then the risk $\mathcal{R}(\hat{\omega}_p\hat{\omega}_s)$ is given by
\begin{equation*}
    \mathcal{R}(\hat{\omega}_p\hat{\omega}_s) = \left[\left(C_1 + C_2K_1\right)\left(1-\frac{n_t}{d}\right) + \left(1-C_1 - C_2\right)\right]||\omega_t||_F^2 + C_2K_2\varepsilon,
\end{equation*}
where $\varepsilon = ||\omega_t (I_{d \times d} - \omega_s^{\dagger} \omega_s )||_F^2$ and 
\begin{align*}
C_1 &= \frac{n_sc_s(d-n_s)}{d(d-1)(d+2)} \quad \textrm{,} \quad C_2 = \frac{n_s\left[d(n_s+1)-2\right]}{d(d-1)(d+2)}~, \\
K_1 &= 1 - \frac{n_t(d-c_s)}{(d-1)(d+2)}\quad \textrm{,} \quad K_2 = \frac{n_t}{d}+\frac{n_t(d-n_t)}{(d-1)(d+2)}~.
\end{align*}
\end{theorem}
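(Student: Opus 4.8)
The plan is to convert the risk into an expectation of a Frobenius norm over the training draws, then evaluate that expectation by integrating out the two independent random subspaces, the source column space and the target column space, one layer at a time. Since the test point $x\sim\mathbb{P}_t$ is independent of the training data and isotropy gives $\mathbb{E}[xx^\top]=I_d$, conditioning on $(X_s,X_t)$ yields $\mathbb{E}_x\|(\omega_t-\hat\omega_p\hat\omega_s)x\|_F^2=\|\omega_t-\hat\omega_p\hat\omega_s\|_F^2$, so that $\mathcal{R}(\hat\omega_p\hat\omega_s)=\mathbb{E}_{X_s,X_t}\|\omega_t-\hat\omega_p\hat\omega_s\|_F^2$. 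Substituting the minimum-norm solutions with $y_s=\omega_s X_s$ and $y_t=\omega_t X_t$, and writing $P_s:=X_sX_s^\dagger$ for the orthogonal projection onto the $n_s$-dimensional subspace $\mathrm{col}(X_s)$, I get $\hat\omega_s=\omega_s P_s$ and $\hat\omega_p\hat\omega_s=\omega_t Q$ with $Q:=X_t(GX_t)^\dagger G$ and $G:=\omega_s P_s$. A short computation using the identity $(GX_t)^\dagger(GX_t)(GX_t)^\dagger=(GX_t)^\dagger$ shows $Q^2=Q$ and, generically, $QX_t=X_t$; hence $Q$ is the oblique projection onto $\mathrm{col}(X_t)$ that is self-adjoint for the form induced by $S:=G^\top G=P_s\,\omega_s^\top\omega_s\,P_s$, with the explicit form $Q=X_t(X_t^\top S X_t)^{-1}X_t^\top S$. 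The quantity to integrate is therefore $\mathbb{E}_{X_s,X_t}\,\mathrm{tr}\!\big(\omega_t^\top\omega_t\,(I-Q)(I-Q)^\top\big)$.

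The engine of the evaluation is an iterated expectation: I would first fix $P_s$ (equivalently $S$) and integrate over $X_t$, then integrate over the random source subspace. On $\mathrm{range}(S)=\mathrm{col}(X_s)$ the similarity $Q=S^{-1/2}\tilde P\,S^{1/2}$, with $\tilde P$ the orthogonal projection onto $\mathrm{col}(S^{1/2}X_t)$, turns the oblique projection into an ordinary one, while on $\ker S=\mathrm{col}(X_s)^\perp$ one has $I-Q=I$; isotropy makes $\mathrm{col}(\tilde X_t)$ a uniformly random $n_t$-dimensional subspace. Both this inner step and the subsequent integration over the uniformly random $n_s$-dimensional subspace $\mathrm{col}(X_s)$ then reduce to first and second moments of a uniformly random orthogonal projection $P$ of rank $r$ in $\mathbb{R}^d$: rotational invariance forces $\mathbb{E}[P]=\tfrac{r}{d}I_d$ and, for any fixed symmetric $M$, $\mathbb{E}[PMP]=\alpha\,\mathrm{tr}(M)\,I_d+\beta M$ for rational $\alpha,\beta$ whose common denominator is exactly $d(d-1)(d+2)$. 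The appearance of this denominator in all of $C_1,C_2,K_1,K_2$ is the signature of these second projection-moments, with $K_1,K_2$ arising from the inner (rank-$n_t$) integral and $C_1,C_2$ from the outer (rank-$n_s$) integral.

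Finally I would assemble the pieces and match constants. Decomposing $\omega_t=\omega_t\,\omega_s^\dagger\omega_s+\omega_t(I-\omega_s^\dagger\omega_s)$ separates the part of the target map lying in $\mathrm{row}(\omega_s)$ from its complement; the latter is precisely what the source predictions cannot represent, and it is this piece that surfaces as the $\varepsilon=\|\omega_t(I-\omega_s^\dagger\omega_s)\|_F^2$ term carried by the coefficient $C_2K_2$, with everything else collapsing onto $\|\omega_t\|_F^2$. I expect the main obstacle to be the obliqueness of $Q$: because $Q$ is not symmetric, $(I-Q)(I-Q)^\top$ does not reduce to $I-Q$, so genuine cross terms between $\mathrm{col}(X_t)$ and the geometry of $S$ survive and must be tracked through both moment computations. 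Compounding this are the rank and genericity conditions needed to justify $QX_t=X_t$ and the invertibility of $X_t^\top S X_t$, which constrain the relative sizes of $c_s,n_s,n_t,d$ and the degeneracy of $S$, together with the bookkeeping of coupling two independent random projections whose second moments multiply to produce the product terms $C_2K_1$ and $C_2K_2$ in the stated risk.
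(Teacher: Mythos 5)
Your setup is correct, and in one respect it is more careful than the paper's own argument: the reduction $\mathcal{R}(\hat{\omega}_p\hat{\omega}_s)=\mathbb{E}_{X_s,X_t}\|\omega_t-\hat{\omega}_p\hat{\omega}_s\|_F^2$, the identity $\hat{\omega}_s=\omega_s X_sX_s^{\dagger}$, and the observation that $\hat{\omega}_p\hat{\omega}_s=\omega_t Q$ with $Q=X_t(\hat{\omega}_s X_t)^{\dagger}\hat{\omega}_s$ an idempotent but \emph{oblique} projection are all right, and your guess that the denominators $d(d-1)(d+2)$ come from fourth moments of Haar projections is exactly the paper's Lemma~\ref{lemma:: Expectation of mid projections}. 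The gap is in your computational engine. The inner expectation over $X_t$ with $S=\hat{\omega}_s^{\top}\hat{\omega}_s$ fixed does not reduce to moments of a uniformly random projection: the whitened subspace $\mathrm{col}(S^{1/2}X_t)$ is Haar-distributed only when $S$ is (a multiple of) an orthogonal projection, whereas $S=X_sX_s^{\dagger}\,\omega_s^{\top}\omega_s\,X_sX_s^{\dagger}$ carries the full spectrum of $\omega_s^{\top}\omega_s$, so the law of your $\tilde P$ depends on that spectrum; moreover the similarity $Q=S^{\dagger/2}\tilde P S^{1/2}$ is not a Frobenius isometry, so the factors $S^{\pm 1/2}$ survive inside the trace and you need joint, $S$-weighted moments of $\tilde P$. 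Carried out exactly, your route yields MANOVA-type integrals whose value depends on the eigenvalues of $\omega_s^{\top}\omega_s$, not only on $(n_s,n_t,c_s,d)$ and $\varepsilon$ — so it cannot terminate in the stated closed form. You correctly flag the obliqueness as "the main obstacle," but the proposal offers no way past it, and none of the constants $C_1,C_2,K_1,K_2$ is ever actually derived.

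The paper never meets this obstacle because it takes precisely the step you declined to take: it invokes the reverse-order law for pseudoinverses, writing $X_t(\hat{\omega}_s X_t)^{\dagger}\hat{\omega}_s = X_tX_t^{\dagger}\,\hat{\omega}_s^{\dagger}\hat{\omega}_s$ and $\hat{\omega}_s^{\dagger}\hat{\omega}_s = X_s^{||}\omega_s^{\dagger}\omega_s X_s^{||}$, which replaces your oblique $Q$ by products of \emph{orthogonal} projections. After that substitution the risk becomes $\mathrm{tr}\bigl(\omega_t\bigl(\mathbb{E}_{X_t}[X_t^{\perp}\,\mathbb{E}_{X_s}[\hat{\omega}_s^{||}]\,X_t^{\perp}]+\mathbb{E}_{X_s}[\hat{\omega}_s^{\perp}]\bigr)\omega_t^{\top}\bigr)$, and Lemma~\ref{lemma:: Expectation of mid projections} applied twice — over $X_s$ with $(p,q)=(n_s,c_s)$, then over $X_t$ with $(p,q)=(d-n_t,c_s)$, i.e.\ in the opposite order to your iterated expectation — produces the constants. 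Be aware that this reverse-order identity requires hypotheses (Greville's conditions) that fail for generic $X_t,\omega_s$ in the regime $n_t,c_s<d$; this is visible in the fact that the theorem's answer depends on $\omega_s$ only through its row space (via $c_s$ and $\varepsilon$), while your exact $Q$ manifestly depends on the metric $\omega_s^{\top}\omega_s$. So your algebraic caution is warranted, but as a proof of the stated theorem the proposal stalls exactly where the paper's argument relies on this simplification: to finish, you would have to either justify that identity under explicit assumptions or develop tools for expectations of oblique projections, and the proposal does neither.
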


    % \begin{array}{cc}
    %     C_1 = \frac{n_sc_s(d-n_s)}{d(d-1)(d+2)} & K_1 = 1 - \frac{n_t(d-c_s)}{(d-1)(d+2)}\\
    %     C_2 = \frac{n_s\left[d(n_s+1)-2\right]}{d(d-1)(d+2)} & K_2 = \frac{n_t}{d}+\frac{n_t(d-n_t)}{(d-1)(d+2)}
    % \end{array}~.

The $\varepsilon$ term in Theorem~\ref{Thm:: Projection ex scaling linear} quantifies the similarity between the source and target tasks.  For example, if there exists a linear map $\omega_p$ such that $\omega_p \omega_s = \omega_t$, then $\varepsilon = 0$.  In the context of classification, this can occur if the target classes are a strict subset of the source classes.  Since transfer learning is typically performed between source and target tasks that are similar, we expect $\varepsilon$ to be small.  To gain more insights into the behavior of transfer learning using the projected predictor, the following corollary considers the setting where $d \to \infty$ in Theorem~\ref{Thm:: Projection ex scaling linear}; the proof is given in Appendix \ref{appendix: corollary projected predictor}.

\begin{corollary}
\label{corollary: Projection Scaling Law}
Let $S = \frac{n_s}{d}, T = \frac{n_t}{d}, C = \frac{c_s}{d}$ and assume $\|\omega_t \|_F = \Theta(1)$. Under the setting of Theorem \ref{Thm:: Projection ex scaling linear}, if $S, T, C < \infty$ as $d \to \infty$, then:
\begin{enumerate}
    \item[a)] $\mathcal{R}(\hat{\omega}_p \hat{\omega}_s)$ is monotonically decreasing for $S \in [0, 1]$ if $\varepsilon < (1 - C)\|\omega_t\|_F$.
    \item[b)] If $2S - 1 - ST < 0$, then $\mathcal{R}(\hat{\omega}_p \hat{\omega}_s)$ decreases as $C$ increases.  
    \item[c)] If $S = 1$, then $\mathcal{R}(\hat{\omega}_p \hat{\omega}_s) = (1 - T + TC) \mathcal{R}(\hat{\omega}_t) + \varepsilon T (2 - T)$.  
    \item[d)] If $S = 1$ and $T, C = \Theta(\delta)$, then $\mathcal{R}(\hat{\omega}_p \hat{\omega}_s) = (1 - 2T)\|\omega_t\|_F^2 + 2T \varepsilon + \Theta(\delta^2).$ 
\end{enumerate}
\end{corollary}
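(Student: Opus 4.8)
The plan is to start from the exact risk formula in Theorem~\ref{Thm:: Projection ex scaling linear} and pass to the limit $d \to \infty$ under the scalings $n_s = Sd$, $n_t = Td$, $c_s = Cd$. First I would evaluate the four coefficients asymptotically. Writing each of $C_1, C_2, K_1, K_2$ as a ratio of polynomials in $d$ and retaining only leading-order terms gives
\[
C_1 \to SC(1-S), \quad C_2 \to S^2, \quad K_1 \to 1 - T + TC, \quad K_2 \to T(2-T).
\]
Each of these is an elementary computation; for instance, the numerator and denominator of $C_1$ are both cubic in $d$, with leading coefficients $SC(1-S)$ and $1$ respectively.

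Next I would substitute these limits into the bracketed expression of the Theorem and simplify. The key algebraic step is the identity $1 - C - T + TC = (1-C)(1-T)$, which collapses the coefficient of $\|\omega_t\|_F^2$ into the compact form $g(S,C) = 1 - SCT - S^2(1-C)T(2-T)$, while the coefficient of $\varepsilon$ becomes $C_2 K_2 \to S^2 T(2-T)$. This yields the single closed form
\[
\mathcal{R}(\hat{\omega}_p\hat{\omega}_s) = \big[1 - SCT - S^2(1-C)T(2-T)\big]\|\omega_t\|_F^2 + S^2 T(2-T)\,\varepsilon,
\]
from which all four claims follow by differentiation or specialization.

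For part (a) I would differentiate in $S$ to obtain
\[
\frac{\partial \mathcal{R}}{\partial S} = T\Big\{\big[-C - 2S(1-C)(2-T)\big]\|\omega_t\|_F^2 + 2S(2-T)\varepsilon\Big\};
\]
since the $-C\|\omega_t\|_F^2$ contribution is nonpositive, it suffices that $2S(2-T)\varepsilon \le 2S(1-C)(2-T)\|\omega_t\|_F^2$, i.e. $\varepsilon \le (1-C)\|\omega_t\|_F^2$ (using $T < 2$), which is the stated hypothesis under the normalization $\|\omega_t\|_F = \Theta(1)$, giving monotone decrease on $[0,1]$. For part (b) I would differentiate in $C$ at fixed $\varepsilon$; the $\varepsilon$-coefficient is $C$-free, so only $g$ contributes and $\partial_C g = ST(2S - 1 - ST)$, whence $\partial_C \mathcal{R} = ST(2S - 1 - ST)\|\omega_t\|_F^2$, which is negative exactly when $2S - 1 - ST < 0$. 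Part (c) is the specialization $S = 1$, where $g(1,C) = (1 - T + TC)(1 - T)$ and the $\varepsilon$-coefficient is $T(2-T)$. Part (d) then follows by Taylor-expanding the $S = 1$ expression with $T, C = \Theta(\delta)$ and discarding the $TC$, $T^2$, and $T^2\varepsilon$ terms, all of which are $\Theta(\delta^2)$.

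The only genuinely nonroutine step is the identification in part (c) that the factor $(1-T)\|\omega_t\|_F^2$ equals $\mathcal{R}(\hat{\omega}_t)$, the risk of the direct minimum-norm target regressor. I would establish this independently: the predictor is $\hat{\omega}_t = \omega_t X_t X_t^{\dagger} = \omega_t P$ where $P$ is the orthogonal projection onto the $n_t$-dimensional span of the target samples, so by isotropy the inner expectation over $x$ reduces to $\mathrm{tr}\!\big(\omega_t (I - P)\omega_t^{\top}\big)$, and rotational invariance forces $\mathbb{E}_{X_t}[P] = (n_t/d)I$, giving $\mathbb{E}[I - P] = (1-T)I$ and hence risk $(1-T)\|\omega_t\|_F^2$. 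I expect the bookkeeping in the limit computation and the sign-tracking in the two derivatives to be the places demanding the most care, but once the compact closed form for $g(S,C)$ is in hand, no step poses a conceptual obstacle.
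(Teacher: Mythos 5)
Your proposal is correct and follows essentially the same route as the paper's proof: take the $d \to \infty$ limits $C_1 \to SC(1-S)$, $C_2 \to S^2$, $K_1 \to 1 - T + TC$, $K_2 \to T(2-T)$, substitute into Theorem~\ref{Thm:: Projection ex scaling linear} to get the closed form (your $g(S,C)$ is an algebraic rearrangement of the paper's Eq.~\eqref{eq: Risk of projected classifier}), and then obtain (a) from monotonicity in $S$ under $\varepsilon < (1-C)\|\omega_t\|_F^2$, (b) from the sign of the $C$-coefficient, and (c), (d) by specializing $S=1$ and expanding. The only cosmetic differences are that you verify the baseline-risk identity $\mathcal{R}(\hat{\omega}_t) = (1-T)\|\omega_t\|_F^2$ directly rather than citing it, and you use explicit derivatives where the paper argues from the quadratic/linear structure; both are equivalent.
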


% \item[a)] $\mathcal{R}(\hat{\omega}_p \hat{\omega}_s) = \left[1 - 2S^2T + S^2T^2 + (2S - 1 - ST) STC   \right]\|\omega_t\|_F^2$.

% \textcolor{red}{say something about this; intuition? surprising?}
\textbf{Remarks.}  Corollary~\ref{corollary: Projection Scaling Law} not only formalizes several intuitions regarding transfer learning, but also theoretically corroborates surprising dependencies on the number of source examples, target examples, and source classes that were empirically observed in Fig.~\ref{fig:Fig 2 Projection} for kernels and in~\cite{TransferLearningAnalysisImageNet} for convolutional networks.  First, Corollary~\ref{corollary: Projection Scaling Law}a implies that increasing the number of source examples is always beneficial for transfer learning when the source and target tasks are related ($\varepsilon \approx 0$), which matches intuition.  Next, Corollary~\ref{corollary: Projection Scaling Law}b implies that increasing the number of source classes while leaving the number of source examples fixed can decrease performance (i.e. if $2S - 1 - ST > 0$), even for similar source and target tasks satisfying $\varepsilon \approx 0$. This matches the experiments in Fig.~\ref{fig:Fig 2 Projection}c, where we observed that increasing the number of source classes when keeping the number of source examples fixed can be detrimental to the performance.  This is intuitive for transferring from ImageNet32 to CIFAR10, since we would be adding classes that are not as useful for predicting objects in CIFAR10.  Corollary~\ref{corollary: Projection Scaling Law}c implies that when the source and target task are similar and the number of source classes is less than the data dimension, transfer learning with the projected predictor is always better than training only on the target task.  Moreover, if the number of source classes is finite ($C =0$), Corollary~\ref{corollary: Projection Scaling Law}c implies that the risk of the projected predictor decreases an order of magnitude faster than the baseline predictor.  In particular, the risk of the baseline predictor is given by $(1 - T)\|\omega_t\|^2$, while that of the projected predictor is given by $(1 - T)^2\|\omega_t\|^2$.  Note also that when the number of target samples is small relative to the dimension, Corollary~\ref{corollary: Projection Scaling Law}c implies that decreasing the number of source classes has minimal effect on the risk.  Lastly, Corollary~\ref{corollary: Projection Scaling Law}d implies that when $T$ and $C$ are small, the risk of the projected predictor is roughly that of a baseline predictor trained on twice the number of samples.

\par{\textbf{Derivation of the scaling law for the translated predictor in the linear setting.}}  %We next derive scaling laws for translated predictors in the linear setting. 
Analogously to the case for projection, we analyze the risk of the translated predictor when $\hat{\omega}_s$ is the minimum norm solution to $\|y_s - \omega X_s\|_F^2$ and $\hat{\omega}_c$ is the minimum norm solution to $\|y_t - \hat{\omega}_sX_t - \omega X_t\|_F^2$.  

\begin{theorem}
\label{thm: Translation} 
Let $\mathcal{X} = \mathbb{R}^d$, $\mathcal{Y}_s = \mathbb{R}^{c_s}$, $\mathcal{Y}_t = \mathbb{R}^{c_t}$, and let $\hat{\omega}_t = \hat{\omega}_s + \hat{\omega}_c$ where $\hat{\omega}_s = y_s X_s^{\dagger}$ and $\hat{\omega}_c = (y_t -  \hat{\omega}_s X_t) X_t^{\dagger}$.  Assuming that $\mathbb{P}_s$ and $\mathbb{P}_t$ are independent, isotropic distributions on $\mathbb{R}^d$, the the risk $\mathcal{R}(\hat{\omega}_t)$ is given by
\begin{equation*}
       \mathcal{R}(\hat{\omega}_t) = \left[\frac{\|\omega_s - \omega_t\|_F^2}{\|\omega_t\|_F^2} + \left(1-\frac{n_s}{d}\right)\left(1 - \frac{\|\omega_s - \omega_t\|_F^2}{\|\omega_t\|_F^2}\right)\right]\mathcal{R}(\hat{\omega}_b), 
\end{equation*}
where $\hat{\omega}_b = y_t X_t^{\dagger}$ is the baseline predictor.  
\end{theorem}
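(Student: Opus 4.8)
The plan is to reduce the risk to a purely linear-algebraic expectation over the training data, exploit the fact that the translated predictor can be written explicitly in terms of two \emph{independent} random projections, and then integrate out each projection in turn. First I would use the isotropy of $\mathbb{P}_t$ to eliminate the test point. Since $y_s = \omega_s X_s$ and $y_t = \omega_t X_t$ are noiseless and the target ground truth is $w^\ast = \omega_t$, for any fixed $\hat\omega$ we have $\mathbb{E}_x\|(\omega_t-\hat\omega)x\|_F^2 = \mathrm{tr}\!\big((\omega_t-\hat\omega)\,\mathbb{E}[xx^\top]\,(\omega_t-\hat\omega)^\top\big) = \|\omega_t-\hat\omega\|_F^2$ using $\mathbb{E}[xx^\top]=I_d$, so that $\mathcal{R}(\hat\omega_t) = \mathbb{E}_{X_s,X_t}\|\omega_t-\hat\omega_t\|_F^2$ and, as the special case $\hat\omega_b=\omega_t X_tX_t^\dagger$, $\mathcal{R}(\hat\omega_b)=(1-\tfrac{n_t}{d})\|\omega_t\|_F^2$. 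The two facts I will rely on are that, for an isotropic (rotationally invariant) distribution with $n\le d$ samples, the column-space projection $\Pi:=XX^\dagger$ satisfies $\mathbb{E}[\Pi]=\tfrac{n}{d}I_d$ — rotational invariance forces $\mathbb{E}[\Pi]$ to commute with every orthogonal matrix and hence to be a scalar multiple of $I_d$, whose trace equals the almost-sure rank $n$ — and that $I_d-\Pi$ is a symmetric idempotent.

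Next I would derive a closed form for $\hat\omega_t$. Writing $\Pi_s=X_sX_s^\dagger$ and $\Pi_t=X_tX_t^\dagger$, the source fit is $\hat\omega_s=y_sX_s^\dagger=\omega_s\Pi_s$, the residual labels are $y_t-\hat\omega_sX_t=(\omega_t-\omega_s\Pi_s)X_t$, and so $\hat\omega_c=(\omega_t-\omega_s\Pi_s)\Pi_t$. Summing yields the key identity
\begin{equation*}
\omega_t-\hat\omega_t = (\omega_t-\omega_s\Pi_s)(I_d-\Pi_t),
\end{equation*}
which cleanly separates the source randomness (in $A:=\omega_t-\omega_s\Pi_s$) from the target randomness (in $I_d-\Pi_t$).

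Then I would integrate. Because $X_s$ and $X_t$ are independent, I condition on $X_s$ and average over $X_t$ first: by idempotency of $I_d-\Pi_t$ and $\mathbb{E}[\Pi_t]=\tfrac{n_t}{d}I_d$, $\mathbb{E}_{X_t}\|A(I_d-\Pi_t)\|_F^2 = \mathrm{tr}\!\big(A\,\mathbb{E}[I_d-\Pi_t]\,A^\top\big) = (1-\tfrac{n_t}{d})\|A\|_F^2$. It remains to compute $\mathbb{E}_{X_s}\|A\|_F^2$: splitting $A=(\omega_t-\omega_s)+\omega_s(I_d-\Pi_s)$, expanding the square, and using $\mathbb{E}[I_d-\Pi_s]=(1-\tfrac{n_s}{d})I_d$ for the cross term together with idempotency for the quadratic term gives
\begin{equation*}
\mathbb{E}_{X_s}\|A\|_F^2 = \|\omega_t-\omega_s\|_F^2 + \big(1-\tfrac{n_s}{d}\big)\big(\|\omega_t\|_F^2-\|\omega_t-\omega_s\|_F^2\big).
\end{equation*}
Multiplying by $(1-\tfrac{n_t}{d})$ and factoring out $\|\omega_t\|_F^2=\mathcal{R}(\hat\omega_b)/(1-\tfrac{n_t}{d})$ reproduces the stated formula.

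The computation is otherwise routine; the two points demanding care are (i) rigorously justifying $\mathbb{E}[XX^\dagger]=\tfrac{n}{d}I_d$, which is exactly where the isotropy/rotational-invariance hypothesis and the overparameterized assumption $n\le d$ (full column rank almost surely) are indispensable and would fail for a merely white but non-rotationally-invariant distribution; and (ii) the bookkeeping of the cross term $2\langle\omega_t-\omega_s,\ \omega_s(I_d-\Pi_s)\rangle$, whose expectation must combine with $\mathbb{E}\|\omega_s(I_d-\Pi_s)\|_F^2$ to collapse into the clean difference $\|\omega_t\|_F^2-\|\omega_t-\omega_s\|_F^2$. I expect (i) to be the main conceptual obstacle, since once the factorization and the projection-expectation identity are in hand the remaining algebra is a direct Frobenius-norm expansion.
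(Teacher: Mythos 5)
Your proposal is correct and follows essentially the same route as the paper's proof: reduce the risk to $\mathbb{E}_{X_s,X_t}\|\omega_t-\hat\omega_t\|_F^2$ via isotropy, factor the error as $(\omega_t-\omega_s X_sX_s^{\dagger})(I_d-X_tX_t^{\dagger})$, integrate out $X_t$ first using idempotency and $\mathbb{E}[X_tX_t^{\dagger}]=\tfrac{n_t}{d}I_d$, then integrate out $X_s$ and rearrange. Your splitting $A=(\omega_t-\omega_s)+\omega_s(I_d-\Pi_s)$ is only a cosmetic reorganization of the paper's direct expansion followed by adding and subtracting $\tfrac{n_s}{d}\|\omega_t\|_F^2$, and your explicit justification of $\mathbb{E}[XX^{\dagger}]=\tfrac{n}{d}I_d$ via rotational invariance makes precise a step the paper uses without comment.
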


The proof is given in Appendix~\ref{appendix: Proof of Translation Theorem}.  Theorem~\ref{thm: Translation} formalizes several intuitions regarding when translation is beneficial.  In particular, we first observe that if the source model $\omega_s$ is recovered exactly (i.e. $n_s = d$), then the risk of the translated predictor is governed by the distance between the oracle source model and target model, i.e., $\|\omega_s - \omega_t\|$.  Hence, the translated predictor generalizes better than the baseline predictor if the source and target models are similar.  In particular, by flattening the matrices $\omega_s$ and $\omega_t$ into vectors and assuming $\|\omega_s\| = \|\omega_t\|$, the translated predictor outperforms the baseline predictor if the angle between the flattened $\omega_s$ and $\omega_t$ is less than $\frac{\pi}{4}$.   On the other hand, when there are no source samples, the translated predictor is exactly the baseline predictor and the corresponding risks are equivalent.  In general, we observe that the risk of the translated predictor is simply a weighted average between the baseline risk and the risk in which the source model is recovered exactly. 

Comparing Theorem~\ref{thm: Translation} to Theorem~\ref{Thm:: Projection ex scaling linear}, we note that the projected predictor and the translated predictor generalize based on different quantities.  In particular, in the case when $n_s = d$, the risk of the translated predictor is a constant multiple of the baseline risk while the risk of the projected predictor is a multiple of the baseline risk that decreases with $n_t$.  Hence, depending on the distance between $\omega_s$ and $\omega_t$, the translated predictor can outperform the projected predictor or vice-versa.  As a simple example consider the setting where $\omega_s = \omega_t$,  $n_s = d$, and $n_t, c_s < d$; then the translated predictor achieves $0$ risk while the projected predictor achieves non-zero risk. When $\mathcal{Y}_s = \mathcal{X}_t$, we suggest combining the projected and translated predictors, as we did in the case of virtual drug screening.  Otherwise, our results suggest using the translated predictor for transfer learning problems involving distribution shift in the features but no difference in the label sets, and the projected predictor otherwise.

%In general, we suggest combining the projected and translated predictors when $\mathcal{Y}_s = \mathcal{X}_t$, as we did in the case of virtual drug screening.  Otherwise, our results suggest using the translated predictor for transfer learning problems involving distribution shift in the features but no difference in the label sets, and the projected predictor otherwise.
% \textcolor{red}{Maybe you should end with some words for the practitioner in terms of when to use which approach or whether to always combine the two approaches and how to combine them. }

\section{Discussion}

In this work, we developed a framework that enables transfer learning with kernel methods.  In particular, we introduced the projection and translation operations to adjust the predictions of a source model to a specific target task: While projection involves applying a map directly to the predictions given by the source model, translation involves adding a map to the predictions of a source model.  We demonstrated the effectiveness of the transfer learned kernels on image classification and virtual drug screening tasks.  Namely, we showed that transfer learning increased the performance of kernel-based image classifiers by up to $10\%$ over training such models directly on the target task. Interestingly, we found that transfer-learned convolutional kernels performed comparably to transfer learning using the corresponding finite-width convolutional networks.  In virtual drug screening, we demonstrated that the transferred kernel methods provided an improvement over prior work~\cite{SimpleFastFlexibleMatrixCompletion}, even in settings where none of the target drug and cell lines were present in the source task.  For both applications, we  analyzed the performance of the transferred kernel model as a function of the number of target examples and observed empirtically that the transferred kernel followed a simple logarithmic trend, thereby enabling predicting the benefit of collecting more target examples on model performance.  Lastly, we mathematically derived the scaling laws in the linear setting, thereby providing a theoretical foundation for the empirical observations.  We end by discussing various consequences as well as future research directions motivated by our work.  

\par{\textbf{Benefit of pre-training kernel methods on large datasets.}}  A key contribution of our work is enabling kernels trained on large datasets to be transferred to a variety of downstream tasks.  As is the case for neural networks, this allows pre-trained kernel models to be saved and shared with downstream users to improve their applications of interest.  A key next step to making these models easier to save and share is to reduce their reliance on storing the entire training set such as by using coresets~\cite{CoreSetsKernels}.  We envision that by using such techniques in conjunction with modern advances in kernel methods, the memory and runtime costs could be drastially reduced.

%we can drastically reduce the memory and runtime cost of pre-trained machine learning methods. 

% Moreover, as we found that kernels that generalize better on large source tasks also yield improved performance when transferred to target tasks, our work motivates future research focused on developing techniques to improve kernel performance on large datasets \textcolor{red}{isn't this kind of a trivial statement? Do you maybe instead want to move the next sentence here highlighting that you showed in this work how to train kernels fro very large datasets?}.    

% \par{\textbf{Transfer learning with general machine learning models.}}  While our work primarily focused on kernel methods due to their conceptual and computational simplicity, our transfer learning framework is more generally applicable to any machine learning model.  An interesting direction of future work is to leverage the framework developed in this work to adapt logistic and linear models commonly utilized in healthcare to 

%  \textcolor{red}{Then we would need to change this as well. Maybe say that in addition to training, also the evaluation of a large kernel is computationally expensive. The exact problem needs to be better described.} \textcolor{blue}{Adit I'm not sure if the new paragraph above resolves this issue.}  

\par{\textbf{Reducing kernel evaluation time for state-of-the-art convolutional kernels.}}  In this work, we demonstrated that it is possible to train convolutional kernel methods on datasets with over 1 million images.  In order to train such models, we resorted to using the CNTK of convolutional networks with a fully connected last layer. While other architectures, such as the CNTK of convolutional networks with a global average pooling last layer, have been shown to achieve superior performance on CIFAR10~\cite{CNTKArora}, 
%While~\cite{CNTKArora} additionally demonstrated that the CNTK of convolutional networks with a global average pooling last layer achieved superior performance on CIFAR10, 
training such kernels on $50$k images from CIFAR10 is estimated to take $1200$ GPU hours~\cite{NeuralTangentsGoogle}, which is more than three orders of magnitude slower than the kernels used in this work. The main computational bottleneck for using such improved convolution kernels is  evaluating the kernel function itself.  Thus an important problem is to improve the computation time for such kernels, which would allow training better convolutional kernels on large-scale image datasets, which could then be transferred using our framework to improve the performance on a variety of downstream tasks.

% While our work established the feasibility of training simple convolutional kernel methods on datasets with over 1 million images, this process can be sped up by reducing the time needed to evaluate the kernel function itself.  For example, competitive  ; Therefore in this work, we had to resort to ... .
% While there are convolutional kernels that are competitive with modern convolutional networks on datasets with 50k images~\cite{EnhancedCNTK, NeuralTangentsGoogle}, it currently takes up to 1200 GPU hours or more to construct the corresponding kernel matrices~\cite{NeuralTangentsGoogle}.  Improving the computation time for such kernels would allow training better convolutional kernels on large-scale image datasets, which could then be transferred using our framework to improve the performance on a variety of downstream tasks.  

% the main computational bottleneck in training such models lies in evaluating the kernel \textcolor{blue}{function itself}.  

\par{\textbf{Using kernel methods to adapt to distribution shifts.}}  Our work demonstrates that kernels pre-trained on a source task can adapt to a target task with distribution shift when given even just a few target training samples.  This opens novel avenues for applying kernel methods to tackle distribution shift in a variety of domains including healthcare or genomics in which models need to be adapted to handle shifts in cell lines, populations, batches, etc.  In the context of virtual drug screening, we showed that our transfer learning approach could be used to generalize to new cell lines. The scaling laws described in this work may provide an interesting avenue to understand how many samples are required in the target domain for more complex domain shifts such as from a model organism like mouse to humans, a problem of great interest in the pharmacological industry.%offers the opportunity to adapt kernel methods used to impute drug effects on disease cell lines in one organism to those present in another organism.  

\section*{Acknowledgements} The authors were partially supported by ONR (N00014-22-1-2116), NSF (DMS-1651995), NCCIH/NIH, the MIT-IBM Watson AI Lab, MIT J-Clinic for Machine Learning and Health, the Eric and Wendy Schmidt Center at the Broad Institute, and a Simons Investigator Award (to C.U.).

\bibliographystyle{abbrv}
\bibliography{references}

\newpage
\appendix

{\LARGE\textbf{Appendix}}

%\textcolor{red}{You call it Appendix in the text, but maybe call it Supplementary Information A, B, etc. and add a title here saying Supplementary Information}

%\textcolor{red}{Reading Appendix B, I feel something is missing. You should add a section here at the beginning providing readers without much background in kernels a bit more. They won't otherwise be able to understand this. E.g. explain what it means to solve kernel regression exactly, or what it means to use ridge regularization; also explain more what EigenPro does, etc. Or maybe we put this into the main text, see my comments there.}

\section{Review of Kernel Regression}
\label{appendix: Kernel Regression Review} 

We provide a brief review of training kernel machines using kernel ridge regression. For a detailed description of these methods, we refer the reader to~\cite{KernelsBook}.

Let $X = [x^{(1)}, \ldots, x^{(n)}] \in \mathbb{R}^{d \times n}$ denote training examples and $y = [y^{(1)}, \ldots, y^{(n)}] \in \mathbb{R}^{1 \times n}$ denote the corresponding labels.  The key idea behind kernel machines is to first transform the training examples, $X$, using a feature map and then perform linear regression on the transformed data to fit the labels, $y$.  In particular, given a feature map $\psi: \mathbb{R}^{d} \to \mathbb{R}^{m}$, we can nonlinearly fit the data by solving 
\begin{align*}
    \argmin_{w \in \mathbb{R}^{m}} \| y - w^T \psi(X)\|^2,
\end{align*}
where $\psi(X) \in \mathbb{R}^{m \times n}$ and the $i$\textsuperscript{th} column of $\psi(X)$ is $\psi(x^{(i)})$.  In cases where $m$ is much larger than $d$, solving the above system becomes computationally expensive.  The key idea behind kernel regression is to assume that $w$ is given by a linear combination of training examples, i.e., $w = \sum_{i=1}^{n} \alpha_i \psi(x^{(i)})$, and then solve the above system for the coefficients $\alpha = [\alpha_1, \ldots \alpha_n] \in \mathbb{R}^{1 \times n}$ instead.  Assuming this form for $w$, the above optimization problem can be written as 
\begin{align*}
        \argmin_{\alpha \in \mathbb{R}^n} \sum_{i=1}^{n} \left(y^{(i)} - \alpha \sum_{j=1}^{n} \langle \psi(x^{(j)}), \psi(x^{(i)}) \rangle  \right)^2.
\end{align*}
Importantly, this optimization problem only depends on the inner product of the feature map between training samples.  Thus, instead of working with the feature map directly, we can define a \textit{kernel}, i.e., a positive semi-definite, symmetric function $K: \mathbb{R}^{d} \times \mathbb{R}^{d} \to \mathbb{R}$, such that $K(x^{(j)}, x^{(i)}) = \langle \psi(x^{(j)}), \psi(x^{(i)}) \rangle$. The resulting optimization problem is known as  \textit{kernel regression} and given as follows: 
\begin{align*}
    \argmin_{\alpha \in \mathbb{R}^{1 \times n}} \| y - \alpha K_n\|_2^2~, 
\end{align*}
where $K_n \in \mathbb{R}^{n \times n}$ with $(K_n)_{i,j} = K(x^{(i)}, x^{(j)})$.  Importantly, the abstraction to kernel methods allows using feature maps that map into an infinite dimensional inner product space (i.e., a Hilbert space), which are central to the study of infinite-width neural networks.  

In addition to kernel regression described above, we also consider \textit{kernel ridge regression}, which involves modifying the objective with a regularization term with a tunable ridge parameter $\lambda$ as follows:
\begin{align*}
    \argmin_{\alpha \in \mathbb{R}^{1 \times n}} \| y - \alpha (K_n + \lambda I_{n \times n}) \|_2^2 ~.
\end{align*}
We primarily use a small non-zero ridge term to avoid numerical issues leading to a singular (or non-invertible) kernel matrix, $K_n$.

\section{Overview of Image Classification Datasets}
\label{appendix: Image Classification Data}

For projection, we used ImageNet32 as the source dataset and CIFAR10, Oxford 102 Flowers, DTD, and a subset of SVHN as the target datasets.  For all target datasets, we used the training and test splits given by the PyTorch library~\cite{PyTorch}.  For ImageNet32, we used the training and test splits provided by the authors~\cite{ImageNet32}.  An overview of the number of training and test samples used from each of these datasets is outlined below. 
\begin{enumerate}
    \item \textbf{ImageNet32} contains $1,281,167$ training images across $1000$ classes and $50$k images for validation. All images are of size $32 \times 32 \times 3$. 
    \item \textbf{CIFAR10} contains $50$k training images across $10$ classes and $10$k images for validation. All images are of size $32 \times 32 \times 3$. 
    \item \textbf{Oxford 102 Flowers} contains $1020$ training images across $102$ classes and $6149$ images for validation. Images were resized to $32 \times 32 \times 3$ for the experiments.
    \item \textbf{DTD} contains $1880$ training images across $47$ classes and $1880$ images for validation.  Images were resized to size $32 \times 32 \times 3$ for experiments.
    \item \textbf{SVHN} contains $73257$ training images across $10$ classes and $26302$ images for validation.  All images are of size $32 \times 32 \times 3$.  In Fig.~\ref{fig:Fig 2 Projection}, we used the same $500$ training image subset for all experiments.  
\end{enumerate}

\section{Training and Architecture Details}
\label{appendix: Training and Architecture Details}

%Below we outline the training details and architectures used in the image classification tasks.  

\paragraph{Model descriptions:}

\begin{enumerate}
    \item \textbf{Laplace Kernel:} For samples $x, \tilde{x}$, and bandwidth parameter $L$,  the kernel is of the form:
    $$ \exp\left( - \frac{\| x - \tilde{x}\|_2}{L} \right).$$  For our experiments, we used a bandwidth of $L = 10$.
    \item \textbf{NTK:} We used the NTK corresponding to an infinite width ReLU fully connected network with $5$ hidden layers. We chose this depth as it gave superior performance on image classification task considered in~\cite{EshaanConvDepth}.
    \item \textbf{CNTK:} We used the CNTK corresponding to an infinite width ReLU convolutional network with $6$ convolutional layers followed by a fully connected layer.  All convolutional layers used filters of size $3 \times 3$.  The first $5$ convolutional layers used a stride size of $2$ to downsample the image representations.  All convolutional layers used zero padding.  The CNTK was computed using the Neural Tangents library~\cite{NeuralTangentsGoogle}.
    \item \textbf{CNN:} We compare the CNTK to a finite-width CNN of the same architecture that has $16$ filters in the first layer, $32$ filters in the second layer, $64$ filters in the third layer, $128$ filters in the fourth layer, and $256$ filters in the fifth and sixth layers.  In all experiments, the CNN was trained using Adam with a learning rate of $10^{-4}$. 
\end{enumerate}

\paragraph{Details for projection experiments.}  For all kernels trained on ImageNet32, we used EigenPro~\cite{EigenProGPU}.  For all models, we trained until the training accuracy was greater than $99\%$, which was at most 6 epochs of EigenPro.  For transfer learning to CIFAR10, Oxford 102 Flowers, DTD, and SVHN, we applied a Laplace kernel to the outputs of the trained source model.  For CIFAR10 and DTD, we solved the kernel regression exactly using NumPy~\cite{numpy1}.  For DTD and SVHN, we used ridge regularization with a coefficient of $10^{-4}$ to avoid numerical issues with solving exactly.   The CNN was trained for at most 500 epochs on ImageNet32, and the transferred model corresponded to the one with highest validation accuracy during this time.  When transfer learning, we fine-tuned all layers of the CNN for up to $200$ epochs (again selecting the model with the highest validation accuracy on the target task).  

\paragraph{Details for translation experiments.}  For transferring kernels from CIFAR10 to CIFAR-C, we simply solved kernel regression exactly (no ridge regularization term).  For the corresponding CNNs, we trained the source models on CIFAR10 for 100 epochs and selected the model with the best validation performance.  When transferring CNNs to CIFAR-C, we fine-tuned all layers of the CNN for $200$ epochs and selected the model with the best validation accuracy.  When translating kernels from ImageNet32 to CIFAR10 in Appendix Fig.~\ref{SI Fig: Translation ImageNet to CIFAR10}, we used the following aggregated class indices in ImageNet32 to match the classes in CIFAR10: 

\begin{enumerate}
    \item plane = \{372, 230, 231, 232\}
    \item car = \{265, 266, 267, 268 \}
    \item bird = \{383, 384, 385, 386\}
    \item cat = \{8, 10, 11, 55\} 
    \item deer = \{12, 9, 57\}
    \item dog = \{131, 132, 133, 134\}
    \item frog = \{499, 500, 501, 494\}
    \item horse = \{80, 39\}
    \item ship = \{243, 246, 247, 235\} 
    \item truck = \{279, 280, 281, 282\}.
\end{enumerate}

\paragraph{Details for virtual drug screening.}  We used the NTK corresponding to a 1 hidden layer ReLU fully connected network with an offset term.  The same model was used in~\cite{SimpleFastFlexibleMatrixCompletion}.  We solved kernel ridge regression when training the source models, baseline models, and transferred models.  For the source model, we used ridge regularization with a coefficient of $1000$.  To select this ridge term, we used a grid search over $\{1, 10, 100, 1000, 10000\}$ on a random subset of $10$k samples from the source data.  We used a ridge term of $1000$ when transferring the source model to the target data and a term of $100$ when training the baseline model.  We again tuned the ridge parameter for these models over the same set of values but on a random subset of $1000$ examples for one cell line (A549) from the target data.  We used 5-fold cross validation for the target task and reported the metrics computed across all folds.    

\section{Projection Scaling Laws}
\label{appendix: Scaling Law Projection Details}

For the curves showing the performance of the projected predictor as a function of the number of target examples in Fig.~\ref{fig:Fig 2 Projection}b and Appendix Fig.~\ref{SI Fig: DTD and SVHN}a, b, we performed a scaling law analysis.  In particular, we used linear regression to fit the coefficients $a, b$ of the function $y = a \log_2 x + b$ to the points from each of the curves presented in the figures.  Each curve in these figures has $50$ evenly spaced points and all accuracies are averaged over 3 seeds at each point.  The $R^2$ values for each of the fits is presented in Appendix Fig.~\ref{SI Fig: R2 projection values}.  Overall, we observe that all values are above $0.944$ and are higher than $0.99$ for CIFAR10 and SVHN, which have more than $2000$ target training samples.  Moreover, by fitting the same function on the first 5 points from these curves for CIFAR10, we are able to predict the accuracy on the last point of the curve within $2\%$ of the reported accuracy.  

\section{Pre-processing for CMAP Data}
\label{appendix: Pre-processing for CMAP}
While CMAP contains $978$ landmark genes, we removed all genes that were $1$ upon $\log_2 (x + 1)$ scaling the data.  This eliminates $135$ genes and removes batch effects identified in~\cite{COVIDAutoencoding} for each cell line.  Following the methodology of~\cite{COVIDAutoencoding}, we also removed all perturbations with dose less than $0$ and used only the perturbations that had an associated simplified molecular-input line-entry system (SMILES) string, which resulted in a total of $20,336$ perturbations. Following~\cite{COVIDAutoencoding}, for each of the $116,228$ observed drug and cell type combinations we then averaged the gene expression over all the replicates.  

%  \textcolor{red}{I don't understand the following sentences.}
% \section{Combining Projection and Translation}
% \label{appendix: Combining Projection and Translation}

% \textcolor{red}{Move this to the main text?} Projection and translation can be combined into one predictor that can be used for transfer learning with kernels as follows.

% %We now demonstrate how to combine projection and translation using kernel methods.  We define the projected and translated predictor below.  

% \begin{definition}
% \label{def: Combined predictor}
% Given a source dataset $(X_s, y_s)$ and a target dataset $(X_t, y_t)$, the \textbf{projected and translated predictor}, $\hat{f}_{pt}$, is given by: 
% \begin{align*}
%     \hat{f}_{pt}(x) = \hat{f}\left(\left[\hat{f}_s(x) ~ | ~ x\right]\right), \textrm{ where } ~ \hat{f} = \argmin_{f: \mathcal{Y}_s \times \mathcal{X_t} \to \mathcal{Y}_s} \left\| y_t - \hatf \left(\left[\hat{f}_s(X_t) ~| ~ X_t\right]\right)\right\|^2,
% \end{align*}
% where $\hat{f}_s$ is a predictor trained on the source dataset and $\left[\hat{f}_s(x) ~| ~ x\right] \in \mathcal{Y}_s \times \mathcal{X}_t$ is the concatenation of $\hat{f}_s(x)$ and $x$. 
% \end{definition}

\section{Metrics for Evaluating Virtual Drug Screening}
\label{appendix: Metrics Drug Screening} 

 Let $\hat{y} \in \mathbb{R}^{n \times d}$ denote the predicted gene expression vectors and let $y^* \in \mathbb{R}^{n \times d}$ denote the ground truth.  Let $\bar{y}^{(i)} = \frac{1}{d}\sum_{j=1}^{d} y_j^{(i)}$.  Let $\hat{y}_v, y^*_v \in \mathbb{R}^{dn}$ denote vectorized versions of $\hat{y}$ and $y^*$.  We use the same three metrics as those considered in~\cite{SimpleFastFlexibleMatrixCompletion, SontagDrugImputation}.  All evaluation metrics have a maximum value of $1$ and are defined below.

% Let $\hat{Y} \in \mathbb{R}^{m \times n}$ denote the concatenatation of the test predictions for all 10 folds and let $Y^* \in \mathbb{R}^{m \times n}$ denote the ground truth.  We use ${y^*}^{(i)}$  to denote the $i^{th}$ column of $Y^*$.  Let $\bar{y}^{(i)} = c_i \mathbf{1}$ where $c_i = \sum_{j=1}^{m} y_j^{(i)}$.   For $A \in \mathbb{R}^{a \times b}$, let $A_v \in \mathbb{R}^{a \cdot b}$ denote the vectorized version of $A$.  We use the following 3 metrics for evaluating the effectiveness of a given imputation method.    
\vspace{2mm}

1. \textit{Pearson r value}:  
\begin{align*}
    r = \frac{\langle \hat{y}_v, y^*_v \rangle}{\|\hat{y}_v\|_2 \|y_v^*\|_2}.
\end{align*}

\vspace{2mm}

2.\textit{Mean $R^2$}: 
\begin{align*}
    R^2 = \frac{1}{n} \sum_{i=1}^{n} \left( 1 - \frac{\sum_{j=1}^{d}( \hat{y}_j^{(i)} - {y_j^*}^{(i)})^2} {\sum_{j=1}^{d} ({y_j^*}^{(i)} - \bar{y}^{(i)} )^2 }\right).
\end{align*}
\vspace{2mm}

3. \textit{Mean Cosine Similarity}: 
\begin{align*}
    c = \frac{1}{n} \sum_{i=1}^{n} \frac{\langle \hat{y}^{(i)}, {y^*}^{(i)} \rangle }{\|\hat{y}^{(i)}\|_2 \|{y^*}^{(i)}\|_2}.
\end{align*}

We additionally subtract out the mean over cell type before computing cosine similarity to avoid inflated cosine similarity arising from points far from the origin.

\section{DepMap Analysis}
\label{appendix: DepMap Analysis}

To provide another application of our framework in the context of virtual drug screening, we used projection to transfer the kernel methods trained on imputing gene expression vectors in CMAP to predicting the viability of a drug and cell line combination in DepMap~\cite{DepMap}. %demonstrate the effectiveness of imputed gene expression vectors from source models trained on CMAP, we use projection to transfer trained kernel methods to predict viability of drugs in DepMap.  
Viability scores in DepMap are real values indicating how lethal a drug is for a given cancer cell line (negative viability indicates cell death).  To transfer from CMAP to DepMap, we trained a kernel method to predict the gene expression vectors for $55,462$ cell line and drug combinations for the 64 cell lines from CMAP that do not overlap with DepMap.  We then used projection to transfer the model to the $6$ held-out cell lines present in both CMAP and DepMap, which are PC3, MCF7, A375, A549, HT29, and HEPG2. Analogously to our analysis of CMAP, we stratified the target dataset by drugs that appear in both the source and target tasks (9726 target samples) and drugs that are only found in the target task but not in the source task (2685 target samples).  For this application, we found that Mol2Vec~\cite{Mol2Vec} embeddings of drugs outperformed 1024-bit circular fingerprints.  We again used a 1-hidden layer ReLU NTK with an offset term for this analysis and solved kernel ridge regression with a ridge coefficient of $100$.  

Appendix Fig.~\ref{SI Fig: DepMap Analysis}a shows the performance of the projected predictor as a function of the number of target samples when transferring to a target task with drugs that appear in the source task.  All results are averaged over $5$ folds of cross-validation and across 5 random seeds for the subset of target samples considered in each fold. It is apparent that performance is greatly improved when there are fewer than $2000$ samples, thereby highlighting the benefit of the imputed gene expression vectors in this setting.  Interestingly, as in all the previous experiments, we find a clear logarithmic scaling law: fitting the coefficients of the curve $y = a\log_2 x + b$ to the $76$ points on the graph yields an $R^2$ of $0.994$, and fitting the curve to the first $10$ points lets us predict the $R^2$ for the last point on the curve within $0.03$. Appendix Fig.~\ref{SI Fig: DepMap Analysis}b shows how the performance on the target task is affected by the number of genes predicted in the source task.  Again performance is averaged over 5 fold cross-validation and across 5 seeds per fold. When transferring to drugs that were available in the source task, performance monotonically increases when predicting more genes.  On the other hand, when transferring to drugs that were not available in the target task, performance begins to degrade when increasing the number of predicted genes. This is intuitive, since not all genes would be useful for predicting the effect of an unseen drug and could add noise to the  prediction problem upon transfer learning.

% \textcolor{red}{Add a sentence explaining why this is expected or interesting.}

\section{Proof of Theorem~\ref{Thm:: Projection ex scaling linear}}
\label{appendix: Proof of Projection Theorem}

The proof of Theorem~\ref{Thm:: Projection ex scaling linear} relies on the following lemma. 

\begin{lemma}
\label{lemma:: Expectation of mid projections}
Let $D,\Lambda \in \{0, 1\}^{d\times d}$ be two diagonal matrices of rank $p$ and $q$, respectively. Let $V\in R^{d\times d}$ be an orthogonal matrix and $W\in R^{d\times d}$ a Haar distributed random matrix.  If $P = WDW^T, Q = V\Lambda V^T$, then
\begin{equation*}
    \E_W\left[PQP\right] = \frac{p}{d(d-1)(d+2)}\left[q(d-p) I_{d\times d} + \left[d(p+1)-2\right] Q\right].
\end{equation*}
\end{lemma}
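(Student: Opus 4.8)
The plan is to exploit the fact that $P = WDW^T$ is the orthogonal projection onto a uniformly random $p$-dimensional subspace of $\mathbb{R}^d$, so that $\mathbb{E}_W[PQP]$ is a linear, $O(d)$-equivariant function of $Q$. First I would reduce to the case $V = I$: since the Haar measure is invariant under $W \mapsto VW$, writing $W = VW'$ gives $PQP = V\,(W'DW'^T\,\Lambda\,W'DW'^T)\,V^T$ with $W'$ again Haar, so it suffices to compute $\mathbb{E}_W[P\Lambda P]$ for the diagonal projector $\Lambda$ and then conjugate the answer by $V$ (the identity term is invariant under conjugation and $V\Lambda V^T = Q$). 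Equivariance already forces the answer to take the form $a\,\mathrm{tr}(\Lambda)\,I + b\,\Lambda = aq\,I + b\,\Lambda$, so the problem reduces to pinning down the two scalars $a,b$.

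To find $a$ and $b$ I would compute the $(i,l)$ entry directly. Let $S$ and $T$ denote the supports of $D$ and $\Lambda$ (of sizes $p$ and $q$). Since $\Lambda$ is diagonal, $(P\Lambda P)_{il} = \sum_{j \in T} P_{ij}P_{jl}$, and expanding $P_{ij} = \sum_{a \in S} W_{ia}W_{ja}$ gives
\begin{equation*}
\mathbb{E}[(P\Lambda P)_{il}] = \sum_{j\in T}\sum_{a,b\in S} \mathbb{E}\left[W_{ia}W_{ja}W_{jb}W_{lb}\right].
\end{equation*}
The remaining ingredient is the degree-four Weingarten formula for the orthogonal group: $\mathbb{E}[W_{r_1c_1}W_{r_2c_2}W_{r_3c_3}W_{r_4c_4}] = \sum_{\pi,\pi'}\mathrm{Wg}(\pi,\pi')\,\Delta_{\pi}(r)\,\Delta_{\pi'}(c)$, where the sum runs over the three pairings $\pi,\pi'$ of $\{1,2,3,4\}$, each $\Delta_{\pi}$ is the corresponding product of Kronecker deltas on the indices, and the Weingarten coefficients equal $\frac{d+1}{d(d-1)(d+2)}$ on the diagonal and $-\frac{1}{d(d-1)(d+2)}$ off-diagonal. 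I would derive these two constants by inverting the $3\times 3$ Gram matrix of loop counts $G = (d^2-d)I + dJ$.

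Assembling the pieces: with the row multi-index $(i,j,j,l)$ the three pairings contribute $\delta_{ij}\delta_{jl}$, $\delta_{ij}\delta_{jl}$, and $\delta_{il}$, while with the column multi-index $(a,a,b,b)$ they contribute $1$, $\delta_{ab}$, and $\delta_{ab}$. Contracting these against $\mathrm{Wg}$ and then summing over $a,b\in S$ (which sends $1 \mapsto p^2$ and $\delta_{ab}\mapsto p$) and over $j \in T$ (which sends $\sum_{j} \delta_{ij}\delta_{jl} \mapsto \mathbf{1}[i\in T]\,\delta_{il} = \Lambda_{il}$ and the $j$-independent $\delta_{il}$ into $q\,\delta_{il}$), the whole expression collapses to $\frac{pq(d-p)}{d(d-1)(d+2)}\,\delta_{il} + \frac{p[d(p+1)-2]}{d(d-1)(d+2)}\,\Lambda_{il}$. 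Reading off $a$ and $b$ as a matrix identity and conjugating by $V$ then yields the claimed form $\frac{p}{d(d-1)(d+2)}\left[q(d-p)I + (d(p+1)-2)Q\right]$.

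The main obstacle I anticipate is purely the bookkeeping in the last step: organizing the nine pairing combinations, correctly separating the $a=b$ and $a\neq b$ contributions in the column sum, and collecting the coefficients of $\Lambda_{il}$ and $\delta_{il}$ so that the numerators simplify to $p[d(p+1)-2]$ and $pq(d-p)$. One could instead quote the known second-moment formula for the entries of a Haar random projector, but deriving that formula requires exactly the same degree-four computation, so it does not genuinely shorten the argument; I would therefore carry out the Weingarten calculation directly.
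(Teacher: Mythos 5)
Your proposal is correct and follows essentially the same route as the paper: reduce to the diagonal case via Haar invariance (the paper substitutes $U = V^T W$, you substitute $W = VW'$), then compute the entries of $\mathbb{E}[P\Lambda P]$ as degree-four moments of the entries of a Haar orthogonal matrix and collect the coefficients of $\delta_{il}$ and $\Lambda_{il}$. The only difference is that the paper quotes the required fourth-moment formula from a reference (its Lemma 9), whereas you rederive it self-containedly from the orthogonal Weingarten formula by inverting the $3\times 3$ Gram matrix $G = (d^2-d)I + dJ$; your Weingarten coefficients $\frac{d+1}{d(d-1)(d+2)}$ and $-\frac{1}{d(d-1)(d+2)}$ reproduce the paper's cited moment formula exactly, and your bookkeeping yields the same numerators $p\left[d(p+1)-2\right]$ and $pq(d-p)$.
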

\begin{proof}
Without loss of generality, assume that $\Lambda = \text{diag}(\mathbf{1}_{q}, \mathbf{0}_{d-q})$, and $D = \text{diag}(\mathbf{1}_{p}, \mathbf{0}_{d-p})$. Since the Haar distribution is rotational invariant, $U = V^TW$ is Haar distributed. Therefore,
\begin{align*}
    \E_{W}\left[ PQP \right] &= \E_{W \sim Haar}\left[ W D W^T V \Lambda V^T W D W^T \right] \\
    &= V\E_{U \sim Haar}\left[ U D U^T \Lambda U D U^T \right]V^T \\
    &= V \E_{U \sim Haar}\left[A^T A\right] V^T,
\end{align*}
where $A = \Lambda U D U^T$.  Now the upper left $q\times p$ block of $\Lambda U D$ is equal to the corresponding block in $U$, and all other entries of $\Lambda U D$ are $0$. Letting $\Tilde{u}_i = (u_{i1},\dots, u_{ip})$, we have
\begin{equation*}
    A = \left(\begin{array}{ccc}
        \langle \Tilde{u}_1, \Tilde{u}_1\rangle & \cdots & \langle \Tilde{u}_1, \Tilde{u}_d\rangle \\
        \vdots & \ddots & \vdots \\
        \langle \Tilde{u}_{q}, \Tilde{u}_1\rangle & \cdots & \langle \Tilde{u}_{q}, \Tilde{u}_d\rangle \\
        0 & \cdots & 0 \\
        \vdots & \ddots & \vdots \\
        0 & \cdots & 0
    \end{array}\right).
\end{equation*}
Thus, $(A^TA)_{i,r} = \sum_{k = 1}^{q} \langle\Tilde{u}_i,\Tilde{u}_k\rangle \langle\Tilde{u}_r,\Tilde{u}_k\rangle$, and so, $\E_U[A^TA]$ only depends on the fourth moments of the entries in $U$.  In particular, 
$$\E_U\left[(A^TA)_{i,r}\right] = \sum_{\alpha = 1}^{q} \sum_{j,s = 1}^{p} \E_U\left[u_{ij}u_{rs}u_{\alpha j}u_{\alpha s}\right].$$ 

To calculate these moments, we use Lemma 9 from~\cite{Haar4Moments}.  In particular, if $i \neq r$, then $\E_U[(A^T A)_{i, r} = 0$, and if $i = r$, then 
\begin{equation*}
    \E_U\left[u_{ij}u_{is}u_{\alpha j}u_{\alpha s}\right] = \frac{1}{d(d-1)(d+2)}\left[d\delta_{js} + d \delta_{i\alpha} + (d-2)\delta_{i\alpha}\delta_{js}-1\right].
\end{equation*}
Therefore, we have the following closed form for the expectation:
\begin{align*}
    \E_U\left[(A^TA)_{i,i}\right]
    &= \frac{p}{d(d-1)(d+2)}\left[(d-p) I_{d\times d} + \left[d(p+1)-2\right] \Lambda\right].
\end{align*}
Since $\E_{W}[PQP] = V\E_U\left[(A^TA)\right]V^T$, $VV^T = I_{d\times d}$, and $V\Lambda V^T = Q$, the result follows.
\end{proof}

We now prove the following simpler version of Theorem~\ref{Thm:: Projection ex scaling linear} for the case when $n_s = d$ (i.e., when $\hat{\omega}_s = \omega_s$).

\begin{theorem}
\label{Thm:: Simplified projection scaling linear}
Let $\mathcal{X} = \mathbb{R}^d, \mathcal{Y}_s = \mathbb{R}^{c_s}, \mathcal{Y}_t = \mathbb{R}^{c_t}$ and let $\hat{\omega}_p = y_t (\omega_s X_t)^{\dagger}$.  Assuming $\mathbb{P}_s, \mathbb{P}_t$ are independent, isotropic distributions on $\mathbb{R}^d$, then
\begin{align*}
    \mathcal{R}(\hat{\omega}_p \omega_s) &= \left(1-n_t\frac{d-c_s}{(d+2)(d-1)}\right) \mathcal{R}(\hat{\omega}_t) + n_t\left(\frac{1}{d}+\frac{d-n_t}{(d+2)(d-1)}\right) \varepsilon, 
\end{align*}
where $\varepsilon = ||\omega_t (I_{d \times d} - \omega_s^{\dagger} \omega_s )\||_F^2$.
\end{theorem}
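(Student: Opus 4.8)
The plan is to collapse the risk to a single expectation over the target design $X_t$ and then evaluate it with Lemma~\ref{lemma:: Expectation of mid projections}. Since $y_t=\omega_t X_t$ and $\hat\omega_p=\omega_t X_t(\omega_s X_t)^{\dagger}$, the composed predictor is $\hat\omega_p\omega_s=\omega_t A$ with $A:=X_t(\omega_s X_t)^{\dagger}\omega_s$. The test point $x\sim\mathbb{P}_t$ is isotropic and independent of $X_t$, so $\E_x[xx^T]=I_{d\times d}$ and the test-point integral collapses, giving
\begin{equation*}
\mathcal{R}(\hat\omega_p\omega_s)=\E_{X_t}\!\left[\mathrm{tr}\!\left(\omega_t(I-A)(I-A)^T\omega_t^T\right)\right]=\mathrm{tr}\!\left(\Theta\,\E_{X_t}\!\left[I-A-A^T+AA^T\right]\right),
\end{equation*}
where $\Theta:=\omega_t^T\omega_t$. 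Writing $P:=\omega_s^{\dagger}\omega_s$ for the orthogonal projection onto the row space of $\omega_s$, I would first record the three contractions that will furnish the final answer: $\mathrm{tr}(\Theta)=\|\omega_t\|_F^2$, $\mathrm{tr}(\Theta(I-P))=\varepsilon$, and hence $\mathrm{tr}(\Theta P)=\|\omega_t\|_F^2-\varepsilon$. Thus it suffices to express $\E_{X_t}[A]$ and $\E_{X_t}[AA^T]$ as combinations of $I$ and $P$, since contracting such combinations against $\Theta$ produces exactly a linear combination of $\|\omega_t\|_F^2$ and $\varepsilon$.

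The heart of the computation is $\E_{X_t}[A]$ and $\E_{X_t}[AA^T]$, and here I would invoke the isotropy of $\mathbb{P}_t$. The orthogonal projection onto the column space of $X_t$ may be written $WDW^T$ for a Haar-distributed orthogonal $W$ and $D=\text{diag}(\mathbf{1}_{n_t},\mathbf{0}_{d-n_t})$, while $P=V\Lambda V^T$ with $\Lambda=\text{diag}(\mathbf{1}_{c_s},\mathbf{0}_{d-c_s})$ is a fixed projection of rank $c_s$. After manipulating the pseudo-inverse $(\omega_s X_t)^{\dagger}$, the products defining $A$ and $AA^T$ rearrange into the template $\E_W[PQP]$ controlled by Lemma~\ref{lemma:: Expectation of mid projections} with $p=n_t$ and $q=c_s$; substituting its closed form yields $\E_{X_t}[A]$ and $\E_{X_t}[AA^T]$ as explicit rational combinations of $I$ and $P$ whose denominators are exactly the $(d-1)(d+2)$ (and $d$) factors visible in $K_1$ and $K_2$.

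The final step is bookkeeping: substitute these two expectations into $\mathrm{tr}(\Theta(I-2\E A+\E AA^T))$, apply the three contractions above, and collect the coefficients of $\|\omega_t\|_F^2$ and $\varepsilon$. Recognizing the baseline risk $\mathcal{R}(\hat\omega_t)=(1-n_t/d)\|\omega_t\|_F^2$ then lets the $\|\omega_t\|_F^2$ coefficient be repackaged as $\big(1-n_t\frac{d-c_s}{(d+2)(d-1)}\big)\mathcal{R}(\hat\omega_t)$, and the $\varepsilon$ coefficient as $n_t\big(\frac{1}{d}+\frac{d-n_t}{(d+2)(d-1)}\big)$, giving the claim. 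The step I expect to be the main obstacle is precisely the reduction in the previous paragraph: $A=X_t(\omega_s X_t)^{\dagger}\omega_s$ is an oblique (non-symmetric) operator, so coercing $\E_{X_t}[A]$ and $\E_{X_t}[AA^T]$ into the symmetric $\E_W[PQP]$ form requires carefully tracking how the fixed factor $\omega_s$ couples the uniformly random column space of $X_t$ to the fixed row space $P$, and it is here that isotropy must be used to guarantee that both expectations collapse onto the two-dimensional span of $\{I,P\}$ rather than retaining a dependence on the finer spectrum of $\omega_s$.
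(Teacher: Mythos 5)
Your plan follows the same route as the paper's proof: reduce the risk to $\mathrm{tr}\left(\Theta\,\mathbb{E}_{X_t}\left[(I-A)(I-A)^T\right]\right)$ with $\Theta = \omega_t^T\omega_t$, express $A$ through orthogonal projections, and evaluate the resulting moments with Lemma~\ref{lemma:: Expectation of mid projections}. The only real difference is bookkeeping: the paper writes $I - A = X^{\perp}\omega_s^{||} + \omega_s^{\perp}$, so that the cross terms vanish by orthogonality and the lemma is applied once, to $X^{\perp}\omega_s^{||}X^{\perp}$ with $p = d - n_t$, $q = c_s$; you instead expand $I - A - A^T + AA^T$ and would apply the lemma to $X^{||}\omega_s^{||}X^{||}$ with $p = n_t$, $q = c_s$. (Note that $\mathbb{E}_{X_t}[A]$ does not need the lemma at all: once $A = X^{||}\omega_s^{||}$, it follows from the first moment $\mathbb{E}[X^{||}] = (n_t/d)\,I$.) I checked that your variant of the bookkeeping does reproduce the stated coefficients of $\|\omega_t\|_F^2$ and $\varepsilon$, so the endgame is sound.

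The genuine gap is the step you defer with ``after manipulating the pseudo-inverse $(\omega_s X_t)^{\dagger}$.'' That manipulation is the entire content of the argument, and it is not routine: what is needed is the identity $A = X_t(\omega_s X_t)^{\dagger}\omega_s = X_tX_t^{\dagger}\,\omega_s^{\dagger}\omega_s = X^{||}\omega_s^{||}$, i.e.\ a reverse-order law for the Moore--Penrose inverse, and this law is false for general matrices of these shapes. Concretely, take $d = 2$, $c_s = n_t = 1$, $\omega_s = (1, 0)$, and $X_t = (a, b)^T$ with $ab \neq 0$: then $\omega_s X_t = a$, so $X_t(\omega_s X_t)^{\dagger}\omega_s$ has $(2,1)$ entry $b/a$, whereas $X^{||}\omega_s^{||}$ has $(2,1)$ entry $ab/(a^2+b^2)$. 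The two composed predictors are genuinely different, and the difference is not washed out by the expectation: with $\omega_t = (0,1)$, the risk of the true min-norm composition contains $\mathbb{E}[b^2/a^2]$, which diverges for an isotropic $X_t$, while the lemma-based calculation returns a finite value. So the collapse of $\mathbb{E}[A]$ and $\mathbb{E}[AA^T]$ onto $\mathrm{span}\{I, P\}$ --- the step you yourself flag as the main obstacle --- cannot be obtained by ``rearranging'' the pseudo-inverse; it requires either additional hypotheses under which the reverse-order law holds or a reinterpretation of $\hat{\omega}_p\omega_s$ as the projector composition $\omega_t X^{||}\omega_s^{||}$. For what it is worth, the paper's own proof asserts exactly this identity in one line ($y_t(\omega_s X)^{\dagger}\omega_s = \omega_t XX^{\dagger}\omega_s^{\dagger}\omega_s$) without justification, so your proposal stalls at precisely the point the paper glosses over; but as a proof, the step is missing, and as stated it would fail.
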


\begin{proof}

% Let's start by getting an equivalent expression for $\varepsilon$, so we focus on the term $\E_x\left[||\omega_p\hat{\omega}_s x - \omega_t x||^2\right]$. Our first goal is to get rid of the expectation, so expressing the norm of a vector $u$ as $||u|| = u^Tu$ and using the cyclicity of the trace $u^Tu = \Tr({u^Tu}) = \Tr({uu^T})$ we have
% \begin{align*}
%     \E_x\left[||\omega_p\hat{\omega}_s x - \omega_t x||^2\right] &= \E_x\left[\Tr\left(\left(\omega_p\hat{\omega}_s x - \omega_t x\right)\left(\omega_p\hat{\omega}_s x - \omega_t x\right)^T\right)\right] \\
%     &= \Tr\left(\left(\omega_p\hat{\omega}_s - \omega_t \right)\E_x\left[xx^T\right]\left(\omega_p\hat{\omega}_s - \omega_t\right)^T\right)\\
%     &= \Tr\left(\left(\omega_p\hat{\omega}_s - \omega_t \right)\left(\omega_p\hat{\omega}_s - \omega_t\right)^T\right)
% ~.\end{align*} 

% Note that as the system of equations $\omega_p \hat{\omega}_s = \omega_t$ is overdeterminated, then $\omega_p = \omega_t \hat{\omega}_s^\dagger$ and thus 
% \begin{equation*}
% \varepsilon = \Tr\left(\omega_t\left(\hat{\omega}_s^\dagger\hat{\omega}_s - I_{d\times d} \right)\left(\hat{\omega}_s^\dagger\hat{\omega}_s - I_{d\times d}\right)^T\omega_t^T\right)
% =\Tr\left(\omega_t \hat{\omega}_s^\perp \omega_t^T\right) = ||\omega_t \hat{\omega}_s^\perp||_F^2
% ~.\end{equation*}

Let $X = X_t$ to simplify notation.  Let $\omega_s^{||} = \omega_s^{\dagger}\omega_s$, $\omega_s^{\perp} = I_{d \times d} - \omega_s^{||}$, $X^{||} = X X^{\dagger}$, and $X^{\perp} = I_{d \times d} - X^{||}$, and note that $\omega_s^{\perp}, \omega^{||}, X^{\perp}, X^{||} \in \{0, 1\}^{d \times d}$ are all projections.  Then, we have: 
$$\hat{\omega}_p \omega_s = y_t (\omega_s X)^\dagger \omega_s = \omega_t XX^\dagger \omega_s^\dagger \omega_s = \omega_t X^{||}\omega_s^{||}.$$ 
Therefore, 
\begin{align*}
    \hat{\omega}_p \omega_s - \omega_t &= \omega_t \left(X^{||}\omega_s^{||} - I_{d\times d}\right) \\
    &= \omega_t \left[\left(X^{||} - I_{d\times d}\right)\omega_s^{||} + \omega_s^{||} - I_{d\times d}\right] \\
    &= -\omega_t \left[X^{\perp}\omega_s^{||} + \omega_s^{\perp}\right].
    \end{align*}
Using the cyclic property of the trace, the risk is given by
\begin{align*}
    \mathcal{R}(\hat{\omega}_p \omega_s) &= \E_{X,x}\left[\left|\left|\hat{\omega}_p \omega_s x - \omega_t x\right|\right|^2\right] \\
    &= \Tr\left(\E_{X,x}\left[ \omega_t \left(X^{\perp}\omega_s^{||} + \omega_s^{\perp}\right) x x^T \left(X^{\perp}\omega_s^{||} + \omega_s^{\perp}\right)^T \omega_t^T\right]\right) \\
    &= \Tr\left(\E_{X}\left[  \omega_t\left(X^{\perp}\omega_s^{||} + \omega_s^{\perp}\right) \left(X^{\perp}\omega_s^{||} + \omega_s^{\perp}\right)^T \omega_t^T\right]\right)
~.\end{align*}
Using the idempotent property of projections and the fact that $\omega_s^\perp \omega_s^{||} = \omega_s^{||}\omega_s^\perp = \mathbf{0}$, we conclude that
\begin{align*}
    \left(X^{\perp}\omega_s^{||} + \omega_s^{\perp}\right) \left(X^{\perp}\omega_s^{||} + \omega_s^{\perp}\right)^T &= X^{\perp}\omega_s^{||} \omega_s^{||} X^{\perp} + X^{\perp}\omega_s^{||}\omega_s^{\perp} + \omega_s^{\perp}\omega_s^{||}X^{\perp} + \omega_s^{\perp}\omega_s^{\perp} \\
    &= X^{\perp} \omega_s^{||} X^{\perp} + \omega_s^{\perp},
    \end{align*}
and as a consequence that 
\begin{align*}
    \mathcal{R}(\hat{\omega}_p \omega_s) &= \Tr\left(\E_{X}\left[ \omega_t \left(X^{\perp} \omega_s^{||} X^{\perp} + \omega_s^\perp\right) \omega_t^T \right]\right)\\
    &=\Tr\left(\omega_t \left(\E_{X}\left[ X^{\perp} \omega_s^{||} X^{\perp}\right] + \omega_s^\perp\right) \omega_t^T\right).
    \end{align*}
Both $X^{\perp}, \omega_s^{||}$ are projections, and since $X$ follows an isotropic distribution, its right singular vectors (the eigenvectors of $X^{\perp}$) are Haar distributed.  Now using Lemma \ref{lemma:: Expectation of mid projections} with $p = d-n_t, q = c_s$ we obtain that
\begin{equation*}
    \E_X\left[X^{\perp} \omega_s^{||} X^{\perp}\right] = \left(1-\frac{n_t}{d}\right)\left[\frac{c_s n_t}{(d-1)(d+2)}I_{d\times d} + \left(1-\frac{dn_t}{(d-1)(d+2)}\right)\omega_s^{||}\right].
    \end{equation*}
Using $\omega_s^\parallel = I_{d\times d} - \omega_s^\perp$ and reordering the terms we obtain
\begin{align*}
    \E_X\left[X^{\perp} \omega_s^{||} X^{\perp}\right] + \omega_s^\perp &= \left(1-\frac{n_t}{d}\right)\left(1-\frac{n_t(d-c_s)}{(d+2)(d-1)}\right)I_{d\times d} \\
    &\quad \quad + \left(\frac{n_t}{d}+\frac{n_t(d-n_t)}{(d+2)(d-1)}\right)\omega_s^\perp.
    \end{align*}
Lastly, we use the standard result that $\mathcal{R}(\hat{\omega}_t) = \left( 1 - \frac{n_t}{d}\right)$ (see e.g.~\cite{TwoModelsDoubleDescent}) and that $\varepsilon = \Tr\left(\omega_t \omega_s^\perp \omega_t^T\right)$ to conclude that
\begin{align*}
    \mathcal{R}(\hat{\omega}_p \omega_s) &= \left(1-n_t\frac{d-c_s}{(d+2)(d-1)}\right) \mathcal{R}(\hat{\omega}_t) + n_t\left(\frac{1}{d}+\frac{d-n_t}{(d+2)(d-1)}\right) \varepsilon,
    \end{align*}
    which completes the proof.
\end{proof}

Using Lemma~\ref{lemma:: Expectation of mid projections} and Theorem~\ref{Thm:: Simplified projection scaling linear}, we next prove Theorem~\ref{Thm:: Projection ex scaling linear}, which is restated below for the reader's convenience.

\begin{theorem*}
Let $\mathcal{X} = \mathbb{R}^d$, $\mathcal{Y}_s = \mathbb{R}^{c_s}$, $\mathcal{Y}_t = \mathbb{R}^{c_t}$, and let $\hat{\omega}_s = y_s X_s^{\dagger}$ and $\hat{\omega}_p = y_t (\hat{\omega}_s X_t)^{\dagger}$.  Assuming that $\mathbb{P}_s$ and $\mathbb{P}_t$ are independent, isotropic distributions on $\mathbb{R}^d$, then the risk $\mathcal{R}(\hat{\omega}_p\hat{\omega}_s)$ is given by
\begin{equation*}
    \mathcal{R}(\hat{\omega}_p\hat{\omega}_s) = \left[\left(C_1 + C_2K_1\right)\left(1-\frac{n_t}{d}\right) + \left(1-C_1 - C_2\right)\right]||\omega_t||_F^2 + C_2K_2\varepsilon,
\end{equation*}
where $\varepsilon = ||\omega_t (I_{d \times d} - \omega_s^{\dagger} \omega_s )||_F^2$ and 
\begin{align*}
C_1 &= \frac{n_sc_s(d-n_s)}{d(d-1)(d+2)} \quad \textrm{,} \quad C_2 = \frac{n_s\left[d(n_s+1)-2\right]}{d(d-1)(d+2)}~, \\
K_1 &= 1 - \frac{n_t(d-c_s)}{(d-1)(d+2)}\quad \textrm{,} \quad K_2 = \frac{n_t}{d}+\frac{n_t(d-n_t)}{(d-1)(d+2)}~.
\end{align*}
\end{theorem*}

\begin{proof}
As in the proof of Theorem~\ref{Thm:: Simplified projection scaling linear}, we let $X = X_t$ to simplify notation.  We follow the proof of Theorem~\ref{Thm:: Simplified projection scaling linear}, but now account for the expectation with respect to $X_s$.  Namely,
\begin{align*}
    \mathcal{R}(\hat{\omega}_p \hat{\omega}_s) &= \Tr\left(\E_{X_s, X}\left[ \omega_t \left(X^{\perp} \hat{\omega}_s^{||} X^{\perp} + \hat{\omega}_s^\perp\right) \omega_t^T \right]\right)\\
    &=\Tr\left(\omega_t \left(\E_{X_s,X}\left[ X^{\perp} \hat{\omega}_s^{||} X^{\perp}\right] + \hat{\omega}_s^\perp\right) \omega_t^T\right).
    \end{align*}
Using the independence of $\hat{\omega}_s$ and $X^{\perp}$ and Fubini's theorem, we compute the expectations sequentially:
\begin{equation*}
    \mathcal{R}(\hat{\omega}_p \hat{\omega}_s) = \Tr\left(\omega_t \left(\E_{X}\left[ X^{\perp} \E_{X_s}\left[\hat{\omega}_s^{||}\right] X^{\perp}\right] + \E_{X_s}\left[\hat{\omega}_s^\perp\right]\right) \omega_t^T\right).
    \end{equation*}
Now, since $\hat{\omega}_s = \omega_s X_s^{||}$, we have that $\hat{\omega}_s^\dagger  =X_s^{||}\omega_s^\dagger$.  Therefore, $\hat{\omega}_s^{||} = \hat{\omega}_s^\dagger \hat{\omega}_s = X_s^{||}\omega_s^{||}X_s^{||}$. Similarly, $\hat{\omega}_s^\perp = I_{d\times d} - \hat{\omega}_s^{||}$. As a consequence, we calculate the two expectations involving $X_s$ by using Lemma \ref{lemma:: Expectation of mid projections} with $p = n_s, q = c_s$.  In particular, we conclude that
\begin{align*}
    \E_{X_s}\left[\hat{\omega}_s^{||}\right] &= C_1 I_{d\times d} + C_2 \omega_s^{||} = (C_1+C_2) I_{d\times d} - C_2 \omega_s^\perp, \\
    \E_{X_s}\left[\hat{\omega}_s^\perp\right] &= (1-C_1-C_2)I_{d\times d} + C_2 \omega_s^{\perp}.
\end{align*}
Therefore, $\mathcal{R}(\hat{\omega}_p \hat{\omega}_s)$ is given by the sum of the following terms:
\begin{align*}
    C_1 \Tr\left(\omega_t \E_{X}\left[ X^{\perp} \right] \omega_t^T\right) &= C_1 \mathcal{R}(\hat{\omega}_t), \\ 
    C_2 \Tr\left(\omega_t \left(\E_{X}\left[ X^{\perp} \omega^{||} X^{\perp}\right]\right) \omega_t^T\right) &= C_2K_1\mathcal{R}(\hat{\omega}_t) + C_2(K_2-1)\varepsilon, \\
    (1-C_1-C_2)\Tr\left(\omega_t\omega_t^T\right) +  C_2 \Tr\left(\omega_t\omega_s^\perp\omega_t^T\right) &= (1-C_1-C_2)\Tr\left(\omega_t\omega_t^T\right) + C_2 \varepsilon,
~,\end{align*}
where for the second equality, we applied Theorem~\ref{Thm:: Simplified projection scaling linear}, which gives rise to $K_1$ and $K_2$, thereby completing the proof.
\end{proof}

\section{Proof of Corollary~\ref{corollary: Projection Scaling Law}}
\label{appendix: corollary projected predictor}

\begin{proof}

We restate Corollary~\ref{corollary: Projection Scaling Law} below for the reader's convenience.

\begin{corollary*}
Let $S = \frac{n_s}{d}, T = \frac{n_t}{d}, C = \frac{c_s}{d}$ and assume $\|\omega_t \|_F = \Theta(1)$. Under the setting of Theorem \ref{Thm:: Projection ex scaling linear}, if $S, T, C < \infty$ as $d \to \infty$, then:
\begin{enumerate}
    \item[a)] $\mathcal{R}(\hat{\omega}_p \hat{\omega}_s)$ is monotonically decreasing for $S \in [0, 1]$ if $\varepsilon < (1 - C)\|\omega_t\|_F$.
    \item[b)] If $2S - 1 - ST < 0$, then $\mathcal{R}(\hat{\omega}_p \hat{\omega}_s)$ decreases as $C$ increases.  
    \item[c)] If $S = 1$, then $\mathcal{R}(\hat{\omega}_p \hat{\omega}_s) = (1 - T + TC) \mathcal{R}(\hat{\omega}_t) + \varepsilon T (2 - T)$.  
    \item[d)] If $S = 1$ and $T, C = \Theta(\delta)$, then $\mathcal{R}(\hat{\omega}_p \hat{\omega}_s) = (1 - 2T)\|\omega_t\|_F^2 + 2T \varepsilon + \Theta(\delta^2).$ 
\end{enumerate}
\end{corollary*}

We first derive forms for the terms $C_1, C_2, K_1, K_2$ from Theorem~\ref{Thm:: Projection ex scaling linear} as $d \to \infty$.  In particular, we have: 
\begin{align*}
C_1 = SC - S^2C ~~ ; ~~ C_2 = S^2 ~~ ; ~~ K_1 = 1 - T + TC ~~ ; ~~ K_2 = 2T - T^2.
\end{align*} 
Substituting these values into $\mathcal{R}(\hat{\omega}_p \hat{\omega}_s)$ in Theorem~\ref{Thm:: Projection ex scaling linear}, we obtain 
\begin{align}
\label{eq: Risk of projected classifier}
\mathcal{R}(\hat{\omega}_p \hat{\omega}_s) = \left[1 - 2S^2T + S^2T^2 + (2S - 1 - ST) STC   \right]\|\omega_t\|_F^2 + S^2T [2 - T] \varepsilon.
\end{align}
Next, we analyze Eq.~\eqref{eq: Risk of projected classifier} for $S \in [0, 1]$.  For fixed $T$ and $C$, it holds that $\mathcal{R}(\hat{\omega}_p \hat{\omega}_s)$ is a quadratic in $S$ and given by 
\begin{align*}
    \mathcal{R}(\hat{\omega}_p \hat{\omega}_s) = (1 - STC)\|\omega_t\|_F^2 - S^2T[2 - T][1-C]\|\omega_t\|_F^2 + S^2T [2 - T] \varepsilon.
\end{align*}
For $S \in [0, 1]$ and $\varepsilon < (1 - C) \|\omega_t\|_F^2$, this quadratic is strictly decreasing and thus we can conclude that  $\mathcal{R}(\hat{\omega}_p \hat{\omega}_s)$ is decreasing in $S$.  We next observe that $\mathcal{R}(\hat{\omega}_p \hat{\omega}_s)$ is linear in $C$ and thus $\mathcal{R}(\hat{\omega}_p \hat{\omega}_s)$ decreases as $C$ increases if and only if the coefficient of $C$ is negative, i.e. $(2S - ST - 1) < 0$.  Lastly, if $S = 1$, then
\begin{align*}
\mathcal{R}(\hat{\omega}_p \hat{\omega}_s) &= [(1 - T)^2 + (1 - T)TC]\|\omega_t\|_F^2 + T [2 - T] \varepsilon \\
&= (1 - T)(1 - T + TC) \|\omega_t\|_F^2 + T [2 - T] \varepsilon \\
&= (1 - T + TC) \mathcal{R}(\hat{\omega}_t) + T [2 - T] \varepsilon.
\end{align*}
Corollary~\ref{corollary: Projection Scaling Law}c, d follow from the above form of the risk, thus completing the proof.
\end{proof}

\section{Equivalence of Fine-tuned and Translated Linear Models}
\label{appendix: Translation using Linearized NN is Finetuning}

We now prove that for linear models transfer learning using the translated predictor from Definition~\ref{def: Translation} is equivalent to transfer learning via the conventional fine-tuning process. This follows from Proposition~\ref{prop: Equivalence translation and linear models} below, which implies that when parameterized by a linear model, the translated predictor is the interpolating solution for the target dataset that is nearest to the source predictor. 

\begin{prop}
\label{prop: Equivalence translation and linear models}
Let $\hat{f}_s(x) = \langle w_s, \psi(x) \rangle_{\mathcal{H}}$, where  $\psi: \mathbb{R}^{d} \to \mathcal{H}$ is a feature map and $\mathcal{H}$ is a Hilbert space.  Then the translated predictor, $\hat{f}_t$, is the solution to   
\begin{align}
\label{eq: program for translation}
    &\argmin_{w} \|w - w_s\|_{\mathcal{H}} \\
    &\subjectto  \langle w, \psi(X_t) \rangle_{\mathcal{H}} = y_t. \nonumber
\end{align}
\end{prop}
\begin{proof}
Note that any solution $w$ to Problem~\ref{eq: program for translation} can be written as $w = w_s + \tilde{w}$.  Hence, we can rewrite Problem~\ref{eq: program for translation} as follows: 
\begin{align*}
    &\argmin_{\tilde{w}} \|\tilde{w}\|_{\mathcal{H}} \\
    &\subjectto  \langle w_s + \tilde{w}, \psi(X_t) \rangle_{\mathcal{H}} = y_t,
\end{align*}
where the constraint can be simplified to $ \langle \tilde{w}, \psi(X_t) \rangle_{\mathcal{H}} = y_t - \hat{f}_s(X_t)$.  This is precisely the constraint for the translated predictor in Definition~\ref{def: Translation}, thereby completing the proof.  
\end{proof}

\section{Proof of Theorem~\ref{thm: Translation}}
\label{appendix: Proof of Translation Theorem}

We restate Theorem~\ref{thm: Translation} below for convenience and then provide the proof. 
%\textcolor{red}{the below version is not exactly the same as stated in the main text. Change this and use the same notation as in the main text?} \textcolor{blue}{This is updated now (along with the proof).}

\begin{theorem*}
Let $\mathcal{X} = \mathbb{R}^d$, $\mathcal{Y}_s = \mathbb{R}^{c_s}$, $\mathcal{Y}_t = \mathbb{R}^{c_t}$, and let $\hat{\omega}_t = \hat{\omega}_s + \hat{\omega}_c$ where $\hat{\omega}_s = y_s X_s^{\dagger}$ and $\hat{\omega}_c = (y_t -  \hat{\omega}_s X_t) X_t^{\dagger}$.  Assuming that $\mathbb{P}_s$ and $\mathbb{P}_t$ are independent, isotropic distributions on $\mathbb{R}^d$, the the risk $\mathcal{R}(\hat{\omega}_t)$ is given by
\begin{equation*}
       \mathcal{R}(\hat{\omega}_t) = \left[\frac{\|\omega_s - \omega_t\|_F^2}{\|\omega_t\|_F^2} + \left(1-\frac{n_s}{d}\right)\left(1 - \frac{\|\omega_s - \omega_t\|_F^2}{\|\omega_t\|_F^2}\right)\right]\mathcal{R}(\hat{\omega}_b), 
\end{equation*}
where $\hat{\omega}_b = y_t X_t^{\dagger}$ is the baseline predictor.  
\end{theorem*}

\begin{proof}
We prove the statement by directly simplifying the risk as follows. 
\begin{align*}
    \mathcal{R}(\hat{\omega}_t) &= \mathbb{E}_{x, X_s, X_t}\left[(\hat{\omega}_t x - \omega_t x)^2 \right] \\
    &= \mathbb{E}_{X_s, X_t}\left[ \|\hat{\omega}_t - \omega_t\|_F^2 \right] \\
    &= \mathbb{E}_{X_s, X_t}\left[ \|\hat{\omega}_s + (y_t - \hat{\omega}_s X_t)X_t^{\dagger} - \omega_t\|_F^2 \right] \quad \text{(By Definition~\ref{def: Translation})}\\
    &= \mathbb{E}_{X_s, X_t}\left[ \|\hat{\omega}_s(I - X_tX_t^{\dagger}) - \omega_t (I - X_t X_t^{\dagger})\|_F^2 \right] \quad \text{(As $y_t = \omega_t X_t$)}\\
    &= \left( 1 - \frac{n_t}{d} \right) \mathbb{E}_{X_s}\left[ \|\hat{\omega}_s - \omega_t \|_F^2 \right] \quad \text{$\left(\text{As~} \mathbb{E}_{X_t}[X_tX_t^{\dagger}] = \frac{n_t}{d}\right)$} \\
    &= \left( 1 - \frac{n_t}{d} \right) \mathbb{E}_{X_s}\left[ \|\omega_s X_sX_s^{\dagger}\|_F^2 +  \|\omega_t \|_F^2 - 2\langle \omega_s X_s X_s^{\dagger}, \omega_t \rangle \right] \\
    &= \left( 1 - \frac{n_t}{d} \right) \left[ \frac{n_s}{d} \|\omega_s\|_F^2 +  \|\omega_t \|_F^2 - \frac{2n_s}{d}\langle \omega_s , \omega_t \rangle \right] \\
    &= \left(1-\frac{n_t}{d}\right)\left[\|\omega_s - \omega_t\|_F^2 + \left(1-\frac{n_s}{d}\right)\left(\|\omega_t\|_F^2 - \|\omega_t - \omega_s\|_F^2\right)\right] \\
    &= \left[\frac{\|\omega_s - \omega_t\|_F^2}{\|\omega_t\|_F^2} + \left(1-\frac{n_s}{d}\right)\left(1 - \frac{\|\omega_s - \omega_t\|_F^2}{\|\omega_t\|_F^2}\right)\right]\mathcal{R}(\hat{\omega}_b),
\end{align*}
where the penultimate equality follows from adding and subtracting the term $\frac{n_s}{d} \|\omega_t\|_F^2$ and the last equality is given by $\mathcal{R}(\hat{\omega_b}) = \left( 1 - \frac{n_t}{d}\right) \| w_t\|_F^2$, thereby completing the proof.
\end{proof}

\section{Code and Hardware Details}
\label{appendix: code and hardware}
All experiments were run using two servers.  One server had 128GB of CPU random access memory (RAM) and 2 NVIDIA Titan XP GPUs each with 12GB of memory.  This server was used for the virtual drug screening experiments and for training the CNTK on ImageNet32.  The second server had 128GB of CPU RAM and 4 NVIDIA Titan RTX GPUs each with 24GB of memory.  This server was used for all the remaining experiments.  All code is available at \url{https://github.com/uhlerlab/kernel_tf}.

\newpage

\begin{figure*}[!h]
    \centering
    \includegraphics[width=\textwidth]{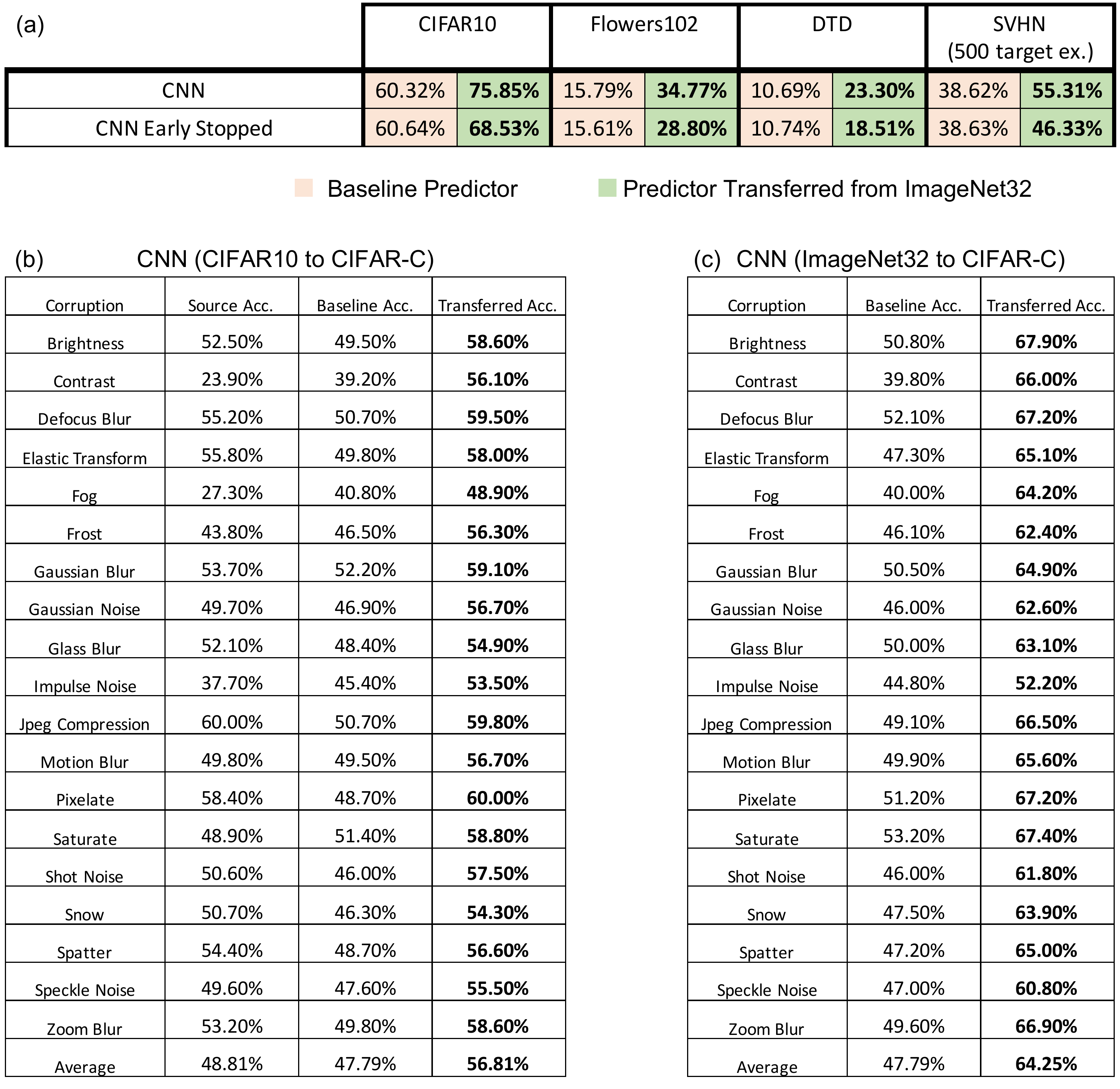}
    \caption{Image classification performance of CNNs that are finite-width analogs of the CNTK considered in this work.  (a) The accuracy of the CNNs on 4 target tasks when transferred from ImageNet32.  All layers of the CNNs are fine-tuned during transfer learning.  The CNN in the top row achieves a test accuracy of $16.72\%$ on ImageNet32. The early stopped CNN in the bottom row achieves an accuracy of $10.692\%$ on ImageNet32, which is comparable with the accuracy of the CNTK ($10.64\%$). (b) Performance of a CNN pre-trained on CIFAR10 when transferred to CIFAR-C. (c) Performance of a CNN pre-trained on ImageNet32 when transfered to CIFAR-C.}
    \label{SI Fig: CNN Projection}
\end{figure*}

\newpage
\begin{figure}[!h]
    \centering
     \includegraphics[width=.8\textwidth]{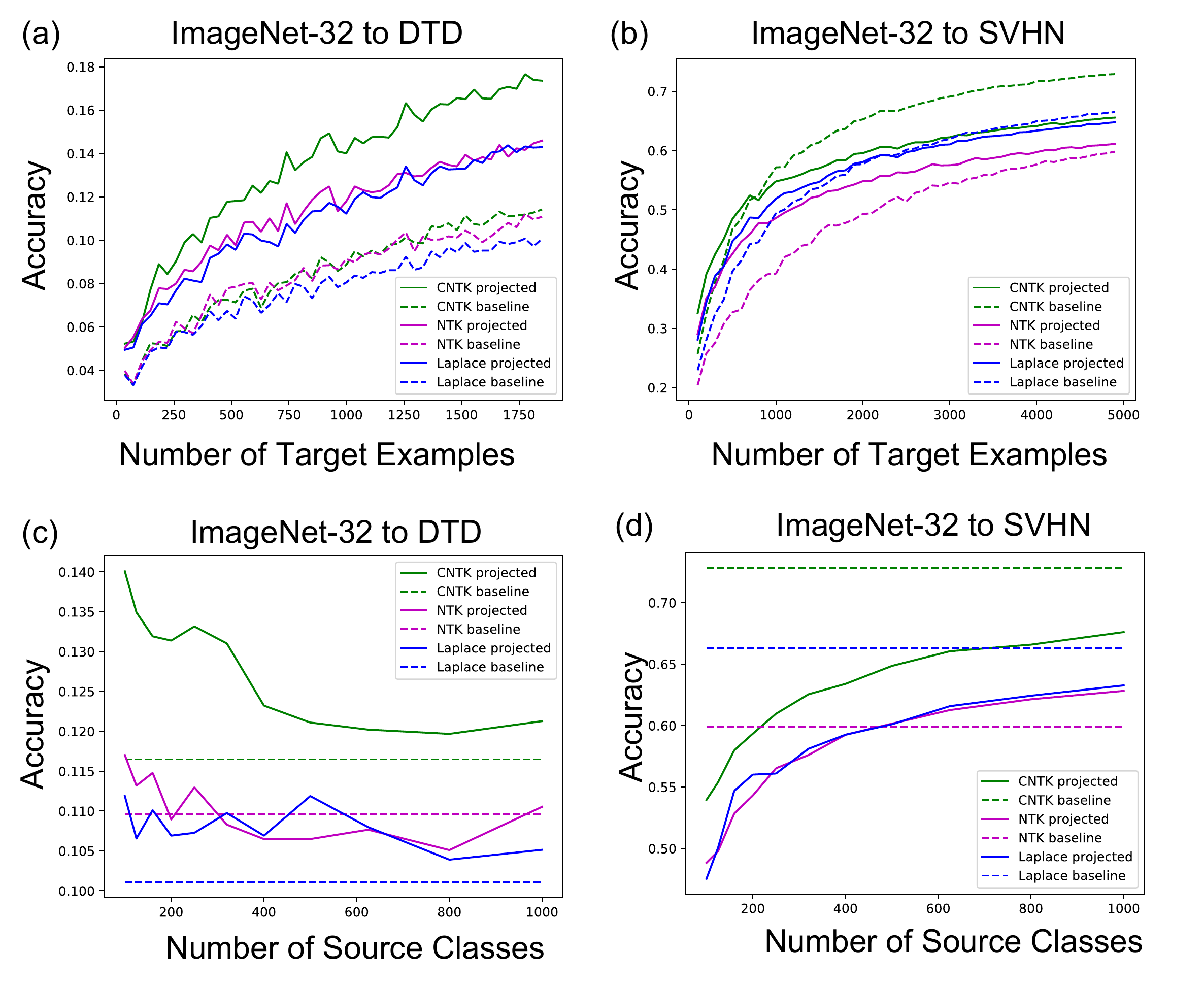}
    \caption{(a, b) Performance of the projected kernel method as a function of the number of target examples when transferred from ImageNet32 to DTD and SVHN.  (c, d) Performance of the projected kernel method as a function of the number of source classes when transferred from ImageNet32 to DTD and SVHN.  The number of source examples was fixed to $40$k and we ensured that the number of source classes divides $40$k.}
    \label{SI Fig: DTD and SVHN}
\end{figure}

\newpage

\begin{figure}[!h]
    \centering
    \begin{subfigure}{0.45\textwidth}
        \centering
        \includegraphics[width=\textwidth]{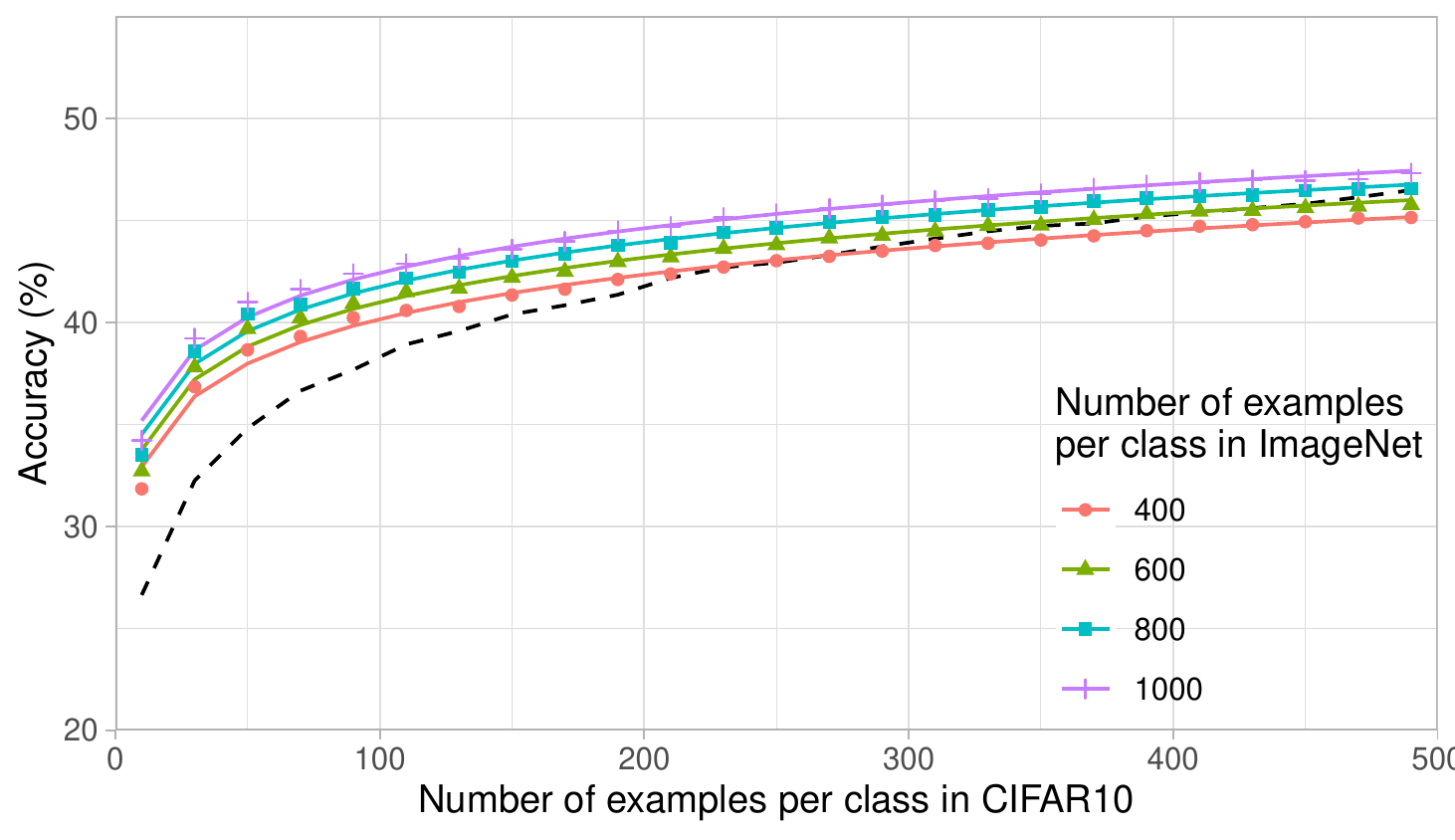}
        \caption{Laplace Kernel}
        \label{subfig: nex CIFAR10 Laplace}
    \end{subfigure}
    \begin{subfigure}{0.45\textwidth}
        \centering
        \includegraphics[width=\textwidth]{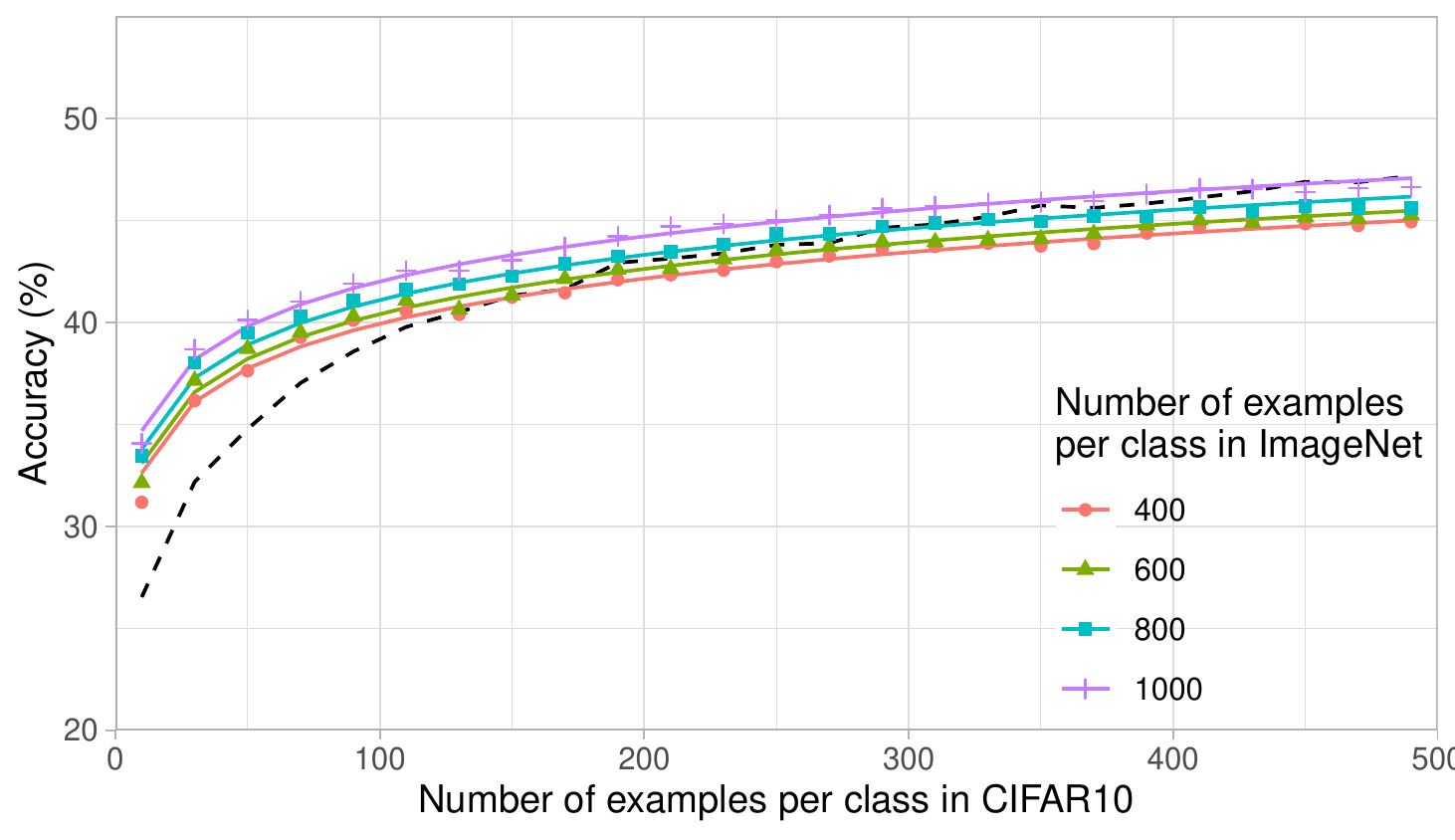}
        \caption{NTK}
        \label{subfig: nex CIFAR10 NTK}
    \end{subfigure}
    \begin{subfigure}{0.45\textwidth}
        \centering
        \includegraphics[width=\textwidth]{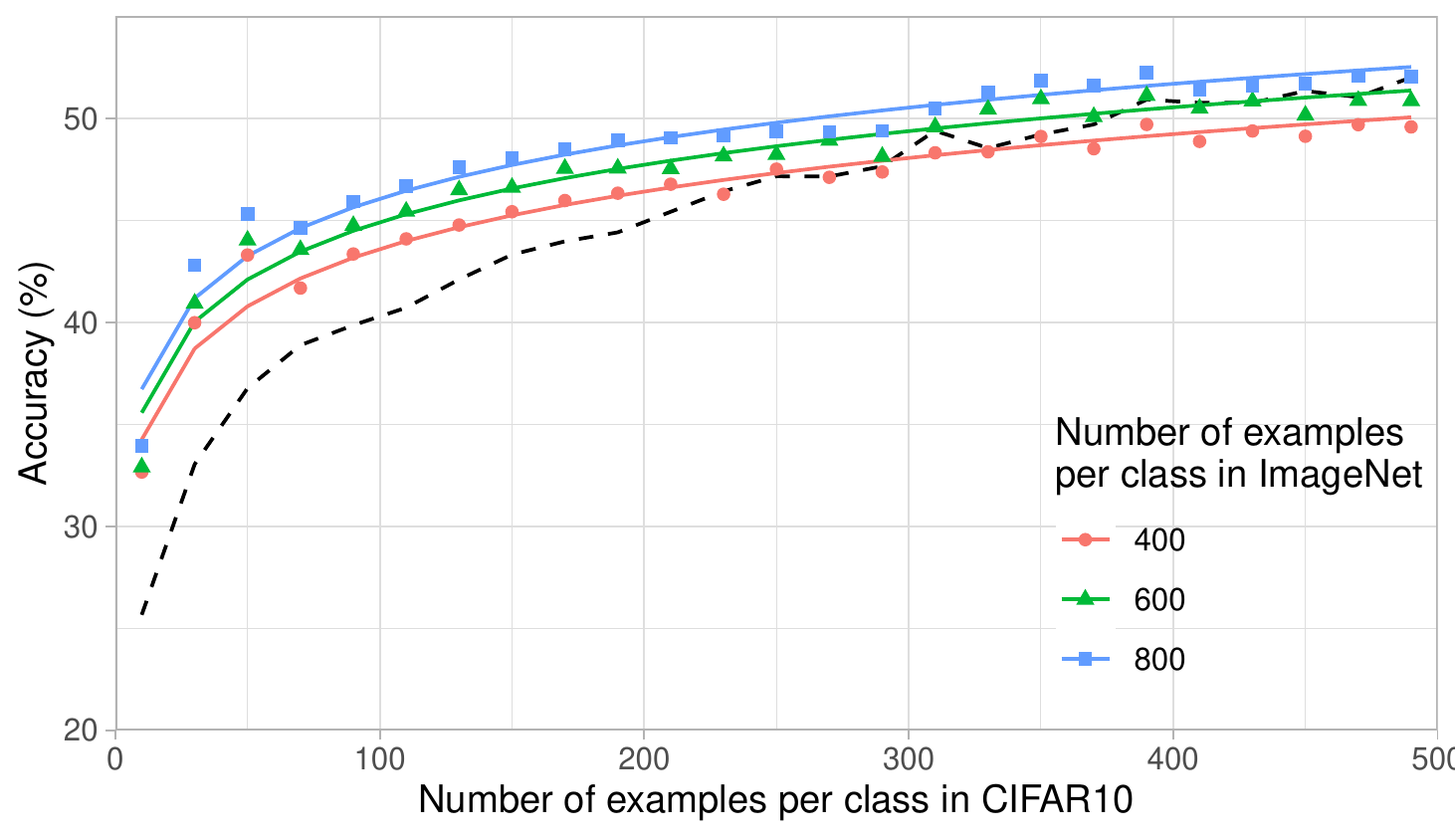}
        \caption{CNTK}
        \label{subfig: nex CIFAR10 CNTK}
    \end{subfigure}
    % \begin{subfigure}{0.45\textwidth}
    %     \centering
    %     \includegraphics[width=\textwidth]{}
    %     \caption{CNN}
    %     \label{subfig: nex CIFAR10 CNN}
    % \end{subfigure}
    \caption{(a, b, c) Performance of three different kernels as a function of the number of source examples and target examples when projected from ImageNet32 to CIFAR10.  The baseline predictor performance is shown as a dashed black line.  Overall, we find that performance improves as the number of source training samples per class increases.} 
    \label{SI Fig: Projection scaling law CIFAR10}
\end{figure}

\newpage 

\begin{figure}[!h]
    \centering
    \includegraphics[width=.7\textwidth]{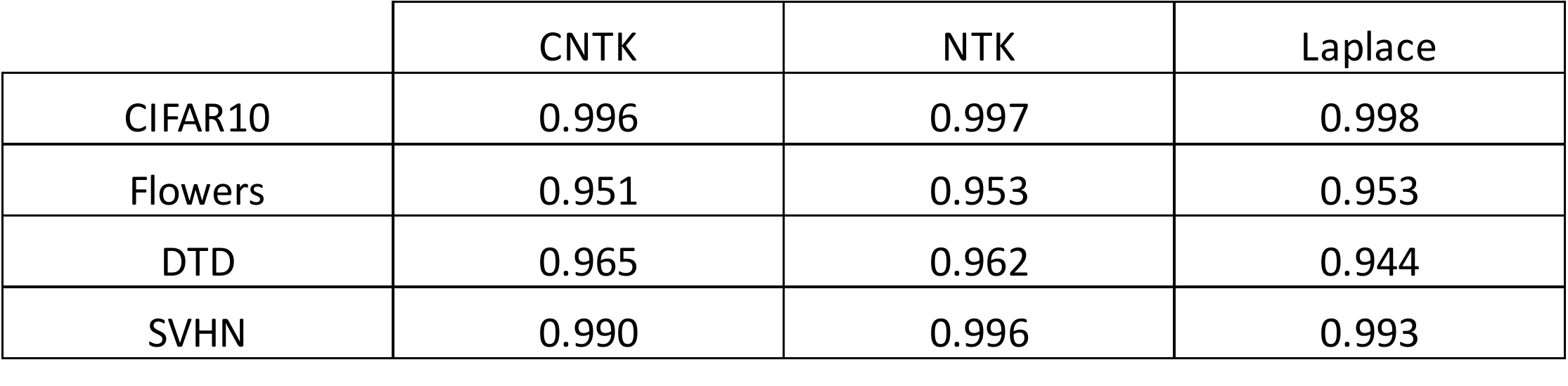}
    \caption{R\textsuperscript{2} values given by fitting the coefficients $a, b$ of the curve $y = a \log_2 x + b$ to the curves found empirically for the projected predictor performance as a function of the target samples.  We see that for all kernels and datasets, the fit yields R\textsuperscript{2} values greater than $0.94$ and values higher than $0.99$ on datasets with more samples such as CIFAR10 and SVHN. }
    \label{SI Fig: R2 projection values}
\end{figure}

\newpage

\begin{figure}[!h]
    \centering
    \begin{subfigure}{0.49\textwidth}
        \centering
        \includegraphics[width=\textwidth, , height=5cm]{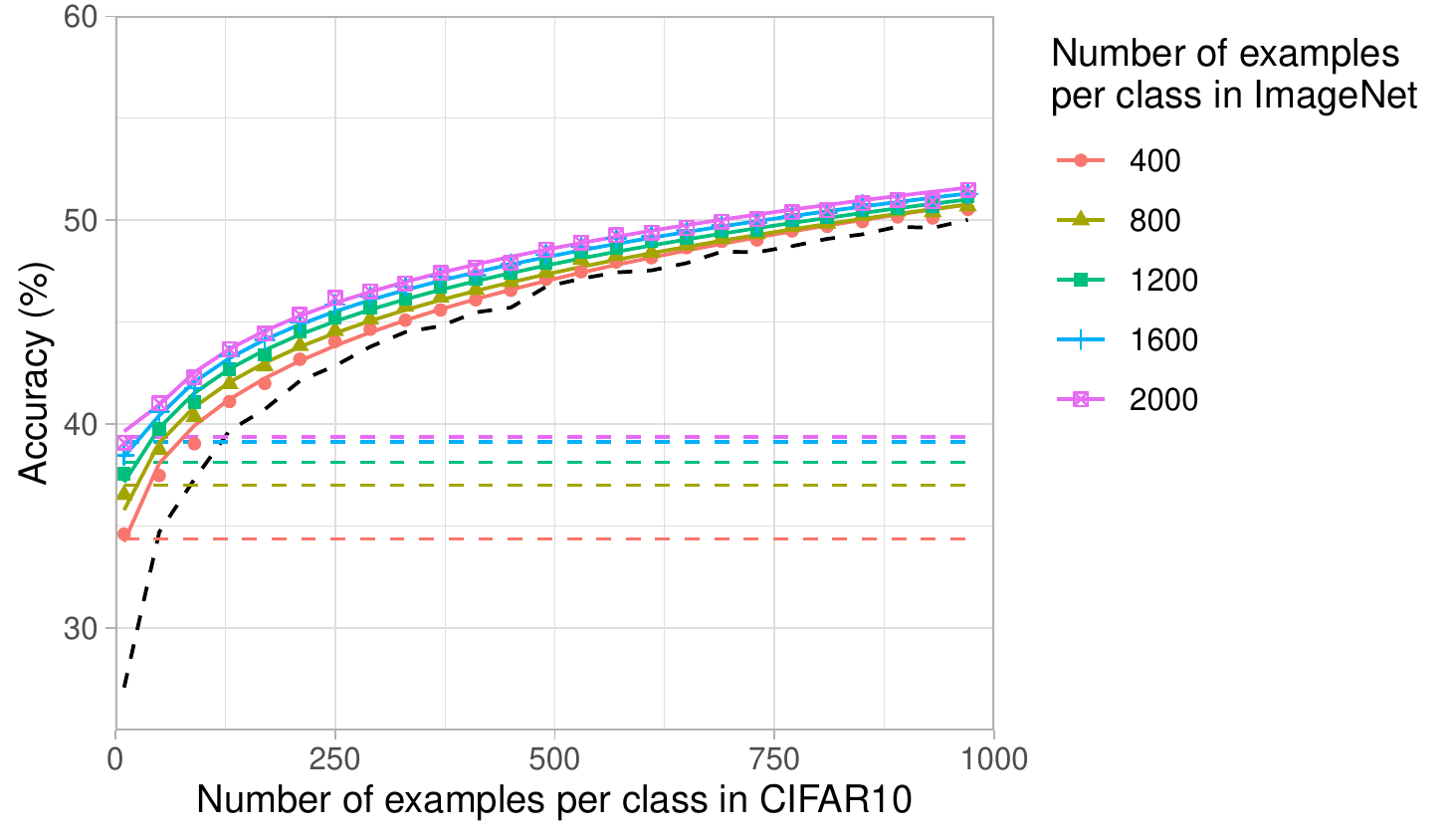}
        \caption{Laplace Kernel}
        \label{subfig: Trans Ex Incr Scaling Law Laplace CIFAR10}
    \end{subfigure}
    \begin{subfigure}{0.49\textwidth}
        \centering
        \includegraphics[width=\textwidth, , height=5cm]{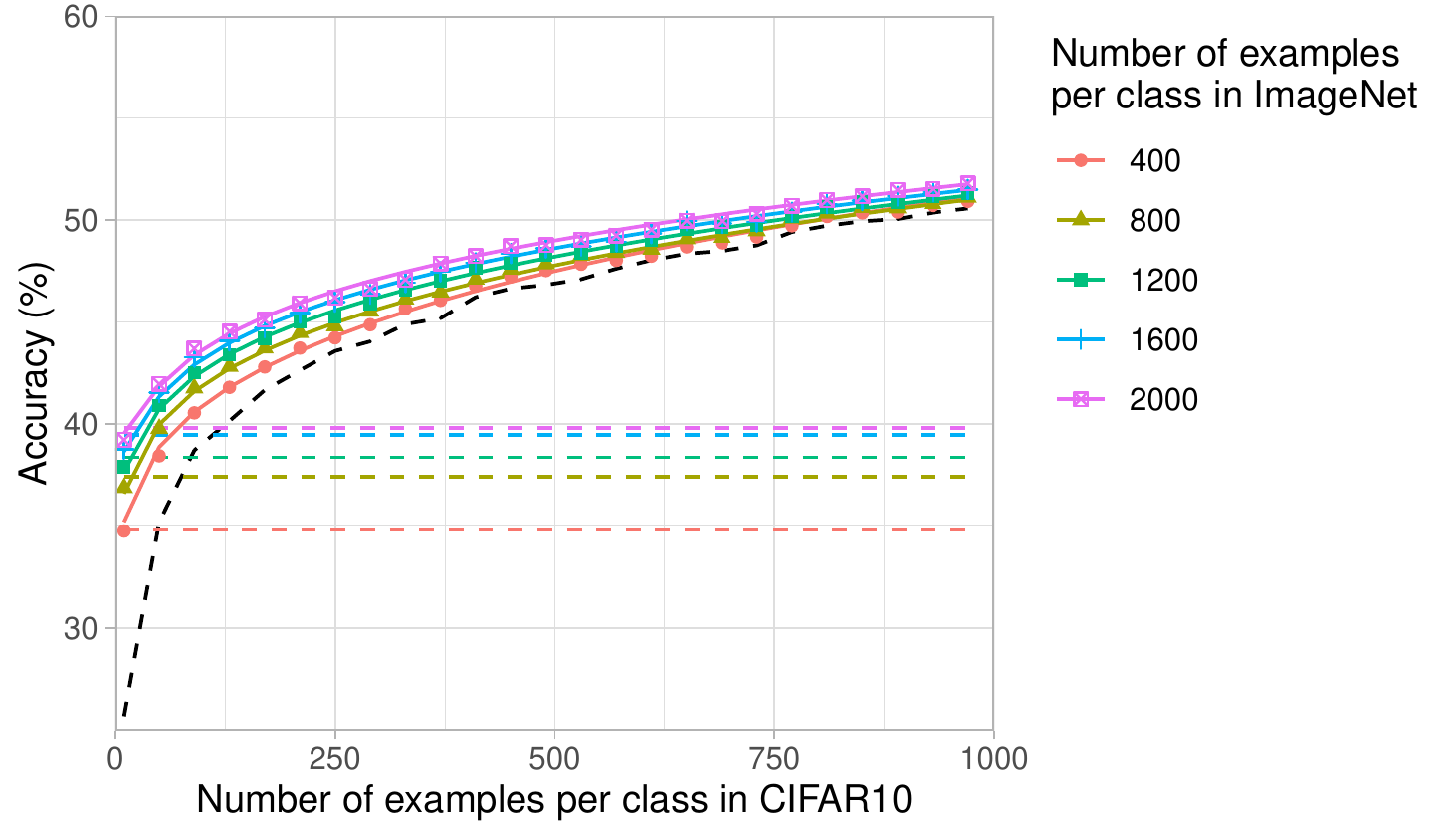}
        \caption{Neural Tangent Kernel}
        \label{subfig: Trans Ex Incr Scaling Law NTK CIFAR10}
    \end{subfigure}
    \begin{subfigure}{0.49\textwidth}
        \centering
        \includegraphics[width=\textwidth, height=5cm]{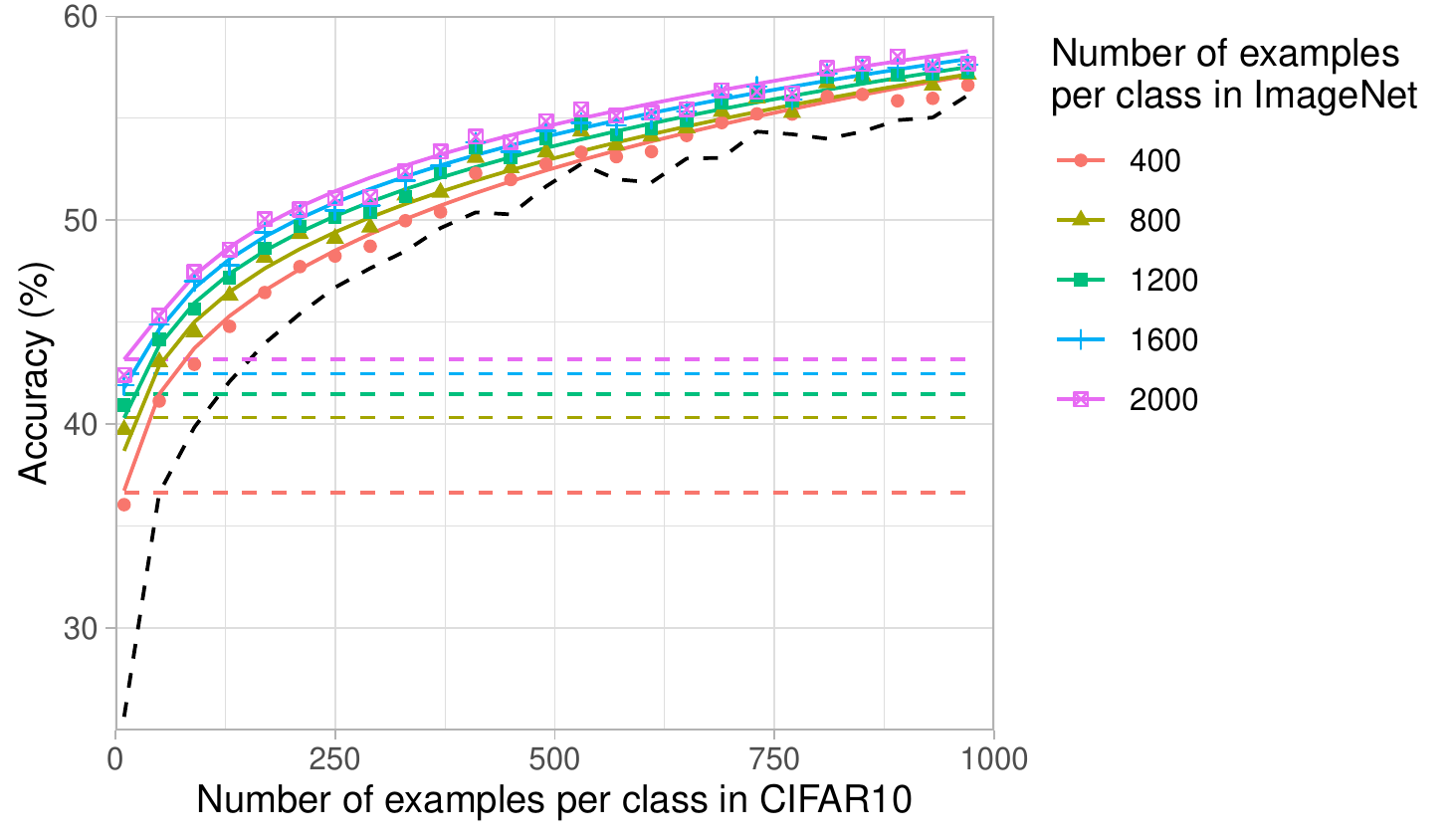}
        \caption{Convolutional Neural Tangent Kernel}
        \label{subfig: Trans Ex Incr Scaling Law CNTK CIFAR10}
    \end{subfigure}
    % \begin{subfigure}{0.49\textwidth}
    %     \centering
    %     \includegraphics[width=\textwidth, , height=5cm]{}
    %     \caption{Convolutional Neural Network}
    %     \label{subfig: Trans Ex Incr Scaling Law CNN CIFAR10}
    % \end{subfigure}
    \caption{Accuracy of the translated predictor form ImageNet32 to CIFAR10.  The black dashed line corresponds to the baseline predictor while the dashed color lines correspond to the source predictors.  We observe that the translated predictor outperforms both projected and baseline predictors when increasing the number of target samples, and the performance of the translated predictor increases as the number of source examples per class increases..}
    \label{SI Fig: Translation ImageNet to CIFAR10}
\end{figure}

\newpage

\begin{figure}[!h]
    \centering
    \includegraphics[width=\textwidth]{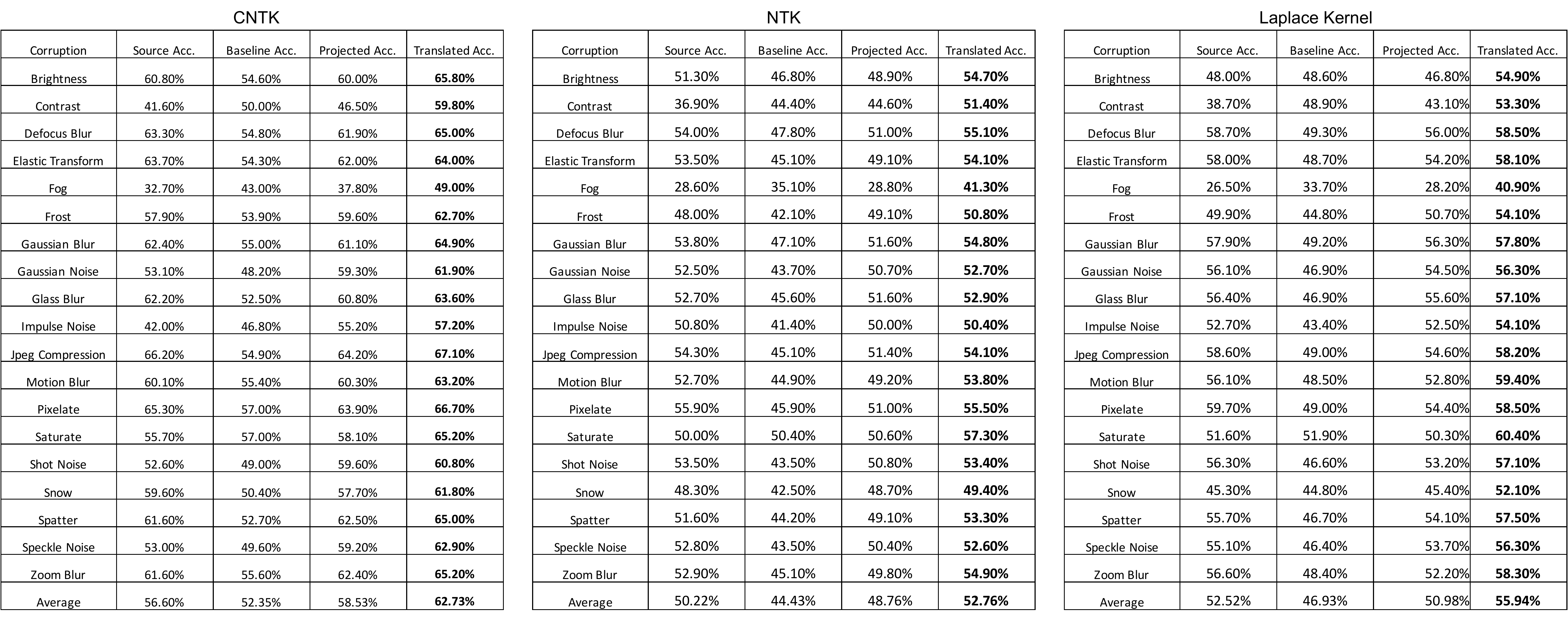}
    \caption{Performance of kernels translated from CIFAR10 to CIFAR-C.  We observe that for all 19 perturbations, the translated predictor outperforms the source, baseline, and projected predictors.}
    \label{SI Fig: Translation Kernel Perf}
\end{figure}

\newpage 

\begin{figure}[!h]
    \centering
    \includegraphics[width=\textwidth]{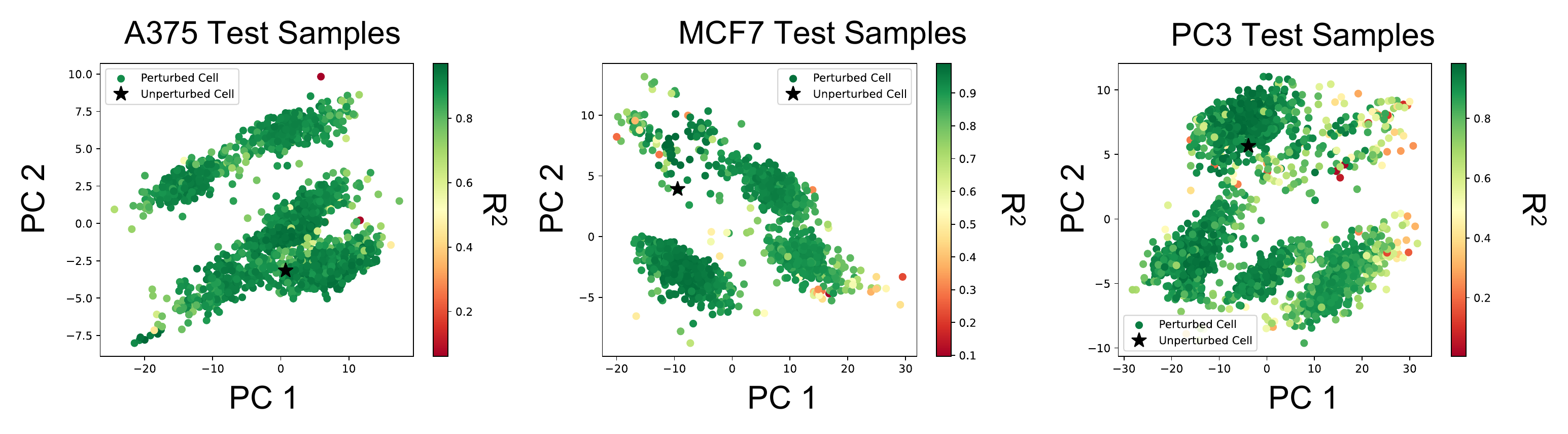}
    \caption{Performance of the transferred kernels with respect to the first two principal components of gene expression for the A375, MCF7, and PC3 cell lines. }
    \label{SI Fig: Performance by Cell Type}
\end{figure}

\newpage

\begin{figure}[!h]
    \centering
    \includegraphics[width=\textwidth]{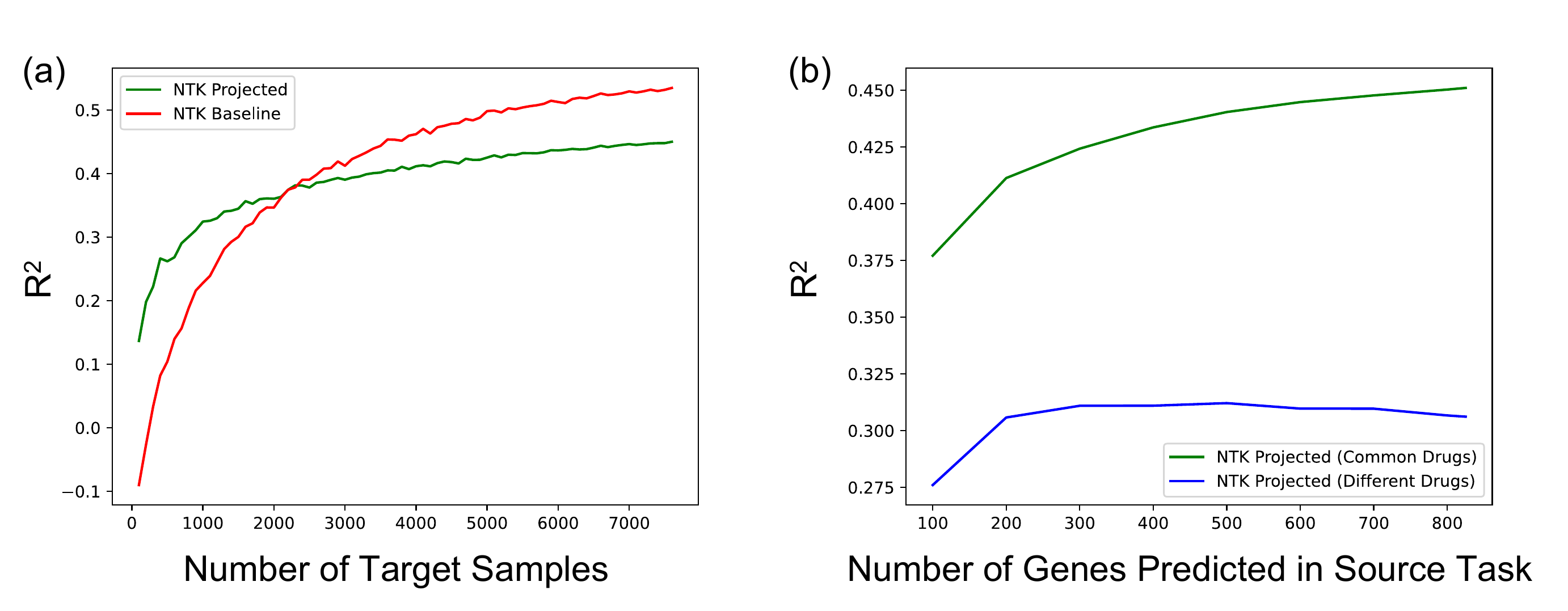}
    \caption{Analysis of projecting the kernel predictors of gene expression in CMAP to viability score in DepMap.  (a) We observe up to a $0.2$ boost in R\textsuperscript{2} values when projecting to new cell lines for which the drugs were available in the source task.  (b) We observe that predicting more genes in the source task is helpful when transferring to new cell lines for which the considered drugs were available in the source task.  Predicting more genes for new cell lines for which the considered drugs were not available in the source task is harmful when transfer learning.}
    \label{SI Fig: DepMap Analysis}
\end{figure}

\end{document}